\documentclass[journal]{IEEEtran}

\usepackage{cite}
\usepackage{amsmath,amssymb,amsfonts}
\usepackage{algorithm}
\usepackage{algorithmic}
\usepackage{graphicx}
\usepackage{textcomp}
\usepackage{color}
\usepackage[cmintegrals]{newtxmath}
\usepackage[caption=false]{subfig}

\newtheorem{proposition}{Proposition}
\newtheorem{definition}{Definition}
\newtheorem{theorem}{Theorem}

\begin{document}
	
	\title{Bandwidth Allocation with Device Partitioning for Federated Learning over Industrial IoT networks}
	
	\author{Kangmin Kim and Jaeyoung Song~\IEEEmembership{Member,~IEEE,}
		% <-this % stops a space
		\thanks{K. Kim was with the School of Electrical and Electronics Engineering, Pusan National University, Busan, 46241, Korea (e-mail : km.kim@pusan.ac.kr).}
		\thanks{J. Song is with the School of Electrical and Electronics Engineering, Pusan National University, Busan, 46241, Korea (e-mail : jsong@pusan.ac.kr).}% <-this % stops a space
	}
	\maketitle

	% The paper headers
	%\markboth{\journalname, VOL. XX, NO. XX, XXXX}
	%{Author \MakeLowercase{\textit{et al.}}: Title}

	\begin{abstract}
		We consider a federated learning (FL) system in which Industrial 
		Internet-of-Things (IIoT) devices collaboratively train a global model over 
		wireless channels without sharing local data. In such systems, communication 
		time is a primary bottleneck that constrains overall training efficiency. 
		Unlike conventional networks that prioritize individual quality-of-service 
		requirements, FL systems collectively aim to converge to an optimal global 
		model as efficiently as possible, which calls for a fundamentally different 
		approach to bandwidth allocation. In this paper, we propose a novel bandwidth 
		allocation policy that exploits the heterogeneity of device computing 
		capabilities to minimize total training time. Rather than distributing 
		bandwidth among all selected devices simultaneously, the proposed policy 
		partitions the participating devices into ordered subsets and sequentially 
		grants each subset exclusive access to the full bandwidth. We formally prove that this partitioning-based policy achieves a strictly lower training time than any bandwidth allocation scheme without partitioning, irrespective of the underlying scheduling algorithm. Furthermore, by reducing per-device 
		transmission duration, the proposed policy also minimizes uplink energy 
		consumption, which is particularly beneficial for battery-constrained IIoT 
		devices. Extensive experiments on real-world datasets — including GC10-Det, 
		an industrial surface defect benchmark, and CIFAR-10, a standard image 
		classification benchmark — demonstrate that the proposed policy consistently 
		reduces training time and energy consumption compared to existing bandwidth 
		allocation schemes, approaching the theoretical lower bound on round time.
	\end{abstract}
	
	\begin{IEEEkeywords}
		Bandwidth allocation, Device heterogeneity, Energy efficiency, Federated learning, Industrial Internet-of-Things
	\end{IEEEkeywords}
	
	\section{Introduction}
		The rapid evolution of Industry 4.0 and the proliferation of Industrial 
        Internet-of-Things (IIoT) devices have generated an unprecedented volume of 
        operational data. Leveraging artificial intelligence (AI), this data has become 
        a critical resource for optimizing manufacturing processes, enabling predictive 
        maintenance, and enhancing quality control~\cite{sodhro2019tii}. Nevertheless, 
        the sheer scale of locally generated data, combined with stringent requirements 
        for protecting proprietary industrial information, poses fundamental challenges 
        to conventional machine learning approaches that rely on centralizing raw data 
        for model training.

        Federated learning (FL) has emerged as a promising paradigm to address these 
        challenges in industrial settings~\cite{nguyen2021wc}. Rather than transferring 
        raw data to a central server, FL enables IIoT devices to perform local model 
        training and transmit only model updates. This collaborative framework allows 
        multiple devices to jointly train a shared global model while preserving the 
        data privacy of individual manufacturing units.

        Despite its privacy-preserving merits, FL inherently requires frequent 
        communication between participating devices and the central server. In 
        industrial environments, which often span complex, multi-domain networks with 
        severely constrained wireless resources, the repeated transmission of model 
        updates constitutes a major deployment bottleneck~\cite{lu2021tii}. A 
        substantial body of research has addressed this challenge from an algorithmic 
        perspective, proposing strategies such as model compression~\cite{shah2021model},
        reduced aggregation frequency~\cite{luping2019cmfl}, local update 
        momentum~\cite{liu2020accelerating}, and adaptive aggregation weight 
        allocation~\cite{wu2021fast}.

        Complementary to algorithmic approaches, communication efficiency can be 
        fundamentally improved through the judicious allocation of physical resources 
        such as bandwidth and power. While extensive resource allocation strategies have 
        been developed for conventional communication 
        systems~\cite{kivanc2003computationally, 4723350, liu2019resource}, these are ill-suited for FL due to a fundamental difference in 
        objective. Traditional networks optimize for stable, high data rates to 
        facilitate rapid individual message delivery. In contrast, FL systems do not 
        require each device to communicate as quickly as possible; rather, their 
        collective goal is to converge to an optimal global model rapidly and efficiently. This 
        distinction motivates the need for FL-specific resource management strategies 
        tailored to industrial settings.

        Accordingly, a growing body of research has investigated communication resource 
        allocation over wireless channels to enhance FL efficiency. In~\cite{gao2021tii}, 
        a resource allocation framework was proposed to minimize training latency in IIoT 
        environments. To mitigate delays caused by stragglers, the authors introduced a 
        greedy client sampling algorithm that considers both channel conditions and data 
        quality. Similarly, the work in~\cite{xu2020client} jointly optimized bandwidth 
        allocation and device energy budgets to minimize overall consumption. Recognizing 
        that IIoT devices often operate under stringent energy constraints, the authors 
        proposed a strategy that selects clients and allocates bandwidth based on real-time 
        battery levels and channel quality. 

        In~\cite{ko2021joint}, the joint client selection and bandwidth allocation 
        problem was formulated where clients providing high-quality updates are prioritized.
        Furthermore, \cite{kuang2021client} proposed strategies to maximize the number 
        of devices capable of satisfying a predefined time constraint without requiring 
        prior channel state information. This was achieved through a combinatorial 
        multi-armed bandit algorithm and a bandwidth allocation strategy analogous 
        to the water-filling algorithm. Finally, \cite{zhao2021system} investigated the 
        joint optimization of client selection and bandwidth allocation under strict 
        round deadlines, exploring the fundamental tradeoff between the number of 
        participating clients and the allocated bandwidth per device.
        
        Several studies~\cite{ren2020scheduling, shi2020device, shi2020joint} pursued bandwidth 
        allocation strategies that ensure all devices complete communication 
        simultaneously. In particular, \cite{ren2020scheduling} assumed synchronous initiation; thus, bandwidth is allocated to have the same transmission rate across selected devices.
        In addition, \cite{shi2020device} relaxed this to asynchronous initiation and 
        established optimality when transmission power scales linearly with 
        bandwidth. Extending to fixed transmission power, \cite{shi2020joint} demonstrated optimality of bandwidth allocation which ensures the simultaneous completion of transmission. 

        Several studies~\cite{ren2020scheduling, shi2020device, shi2020joint} have pursued 
        bandwidth allocation strategies aimed at ensuring all devices complete their 
        transmission simultaneously. Specifically, the authors of~\cite{ren2020scheduling} 
        assumed synchronous initiation; consequently, bandwidth is allocated to maintain 
        uniform transmission rates across all selected devices. This approach was further 
        extended in~\cite{shi2020device}, which relaxed the assumption to asynchronous 
        initiation and established optimality under the condition that transmission power 
        scales linearly with bandwidth. Addressing scenarios with fixed power budgets where 
        power does not scale with bandwidth, \cite{shi2020joint} demonstrated that 
        optimality is still achieved through bandwidth allocation strategies that 
        enforce simultaneous transmission completion.

        However, all of the above schemes~\cite{xu2020client, ren2020scheduling, 
        ko2021joint, shi2020device, shi2020joint, kuang2021client, zhao2021system} share 
        two critical limitations that restrict their applicability to realistic IIoT 
        deployments. First, they assume that all selected devices share the entire 
        bandwidth simultaneously. In practice, IIoT environments involve heterogeneous 
        devices — such as robotic arms, programmable logic controllers, and smart sensors — with widely varying local training times. Under simultaneous allocation, 
        devices that finish training early must wait idly, incurring unnecessary time 
        overhead. Alternatively, sequentially allocating the full bandwidth to subsets 
        of devices can substantially reduce total communication time by exploiting this 
        heterogeneity. Second, these works assume a fixed SNR per device, independent of 
        bandwidth allocation. Under realistic fixed transmission power constraints, 
        which arise from hardware limitations or safety regulations in industrial 
        settings~\cite{zhang2023drl}, noise power varies with allocated bandwidth, 
        rendering SNR a function of bandwidth and significantly complicating the 
        optimization problem.

        To address these gaps, we propose a novel bandwidth allocation policy for 
        IIoT-based FL that introduces device partitioning into the communication 
        schedule. Our approach explicitly accounts for fixed transmission power 
        constraints, under which SNR is treated as a function of bandwidth, faithfully 
        reflecting practical industrial conditions. The proposed policy is agnostic to 
        the underlying FL algorithm, making it compatible with methods addressing data 
        heterogeneity~\cite{zhao2018federated, li2020federatedhetero}, model 
        compression~\cite{sattler2019robust, shah2021model, xu2020ternary}, and 
        federated dropout~\cite{caldas2018expanding, wen2022federated}. Furthermore, we 
        formally prove that our partitioning-based policy outperforms the optimal 
        simultaneous bandwidth allocation strategy, regardless of the client scheduling 
        algorithm employed.

        The main contributions of this paper are summarized as follows:

		\begin{itemize}
			\item We investigate a practical bandwidth allocation problem for 
			industrial FL under fixed device transmission power, faithfully capturing 
			hardware constraints prevalent in IIoT deployments.

			\item We propose a novel device partitioning policy in which heterogeneous 
			devices are divided into sequential groups, each granted exclusive access 
			to the full bandwidth. We formally prove that this policy outperforms any 
			simultaneous allocation scheme, irrespective of the client scheduling 
			algorithm.

			\item We develop a low-complexity partitioning algorithm that achieves 
			near-optimal performance while avoiding the prohibitive computational cost 
			of exhaustive search.

			\item We show that the proposed policy reduces energy consumption 
			of IIoT devices as a byproduct of shorter transmission durations, which 
			is particularly beneficial for energy-constrained devices in industrial 
			deployments.

			\item We conduct extensive experiments on real-world datasets, 
			demonstrating significant reductions in FL round completion time and 
			energy consumption across diverse bandwidth, computation load, and 
			participation rate configurations.
		\end{itemize}

        The remainder of this paper is organized as follows. 
		Section~\ref{sec:system_model} describes the system model. 
		Section~\ref{sec:problem_formulation} presents the bandwidth allocation problem 
		formulation. Section~\ref{sec:bw_partition} details the proposed policy. 
		Section~\ref{sec:exp} reports experimental results on real-world datasets. 
		Finally, Section~\ref{sec:conclusion} concludes the paper.

	\section{System Model}\label{sec:system_model}		
		\subsection{Federated Learning}
		\begin{figure}
			\centering
			\includegraphics[width=\columnwidth]{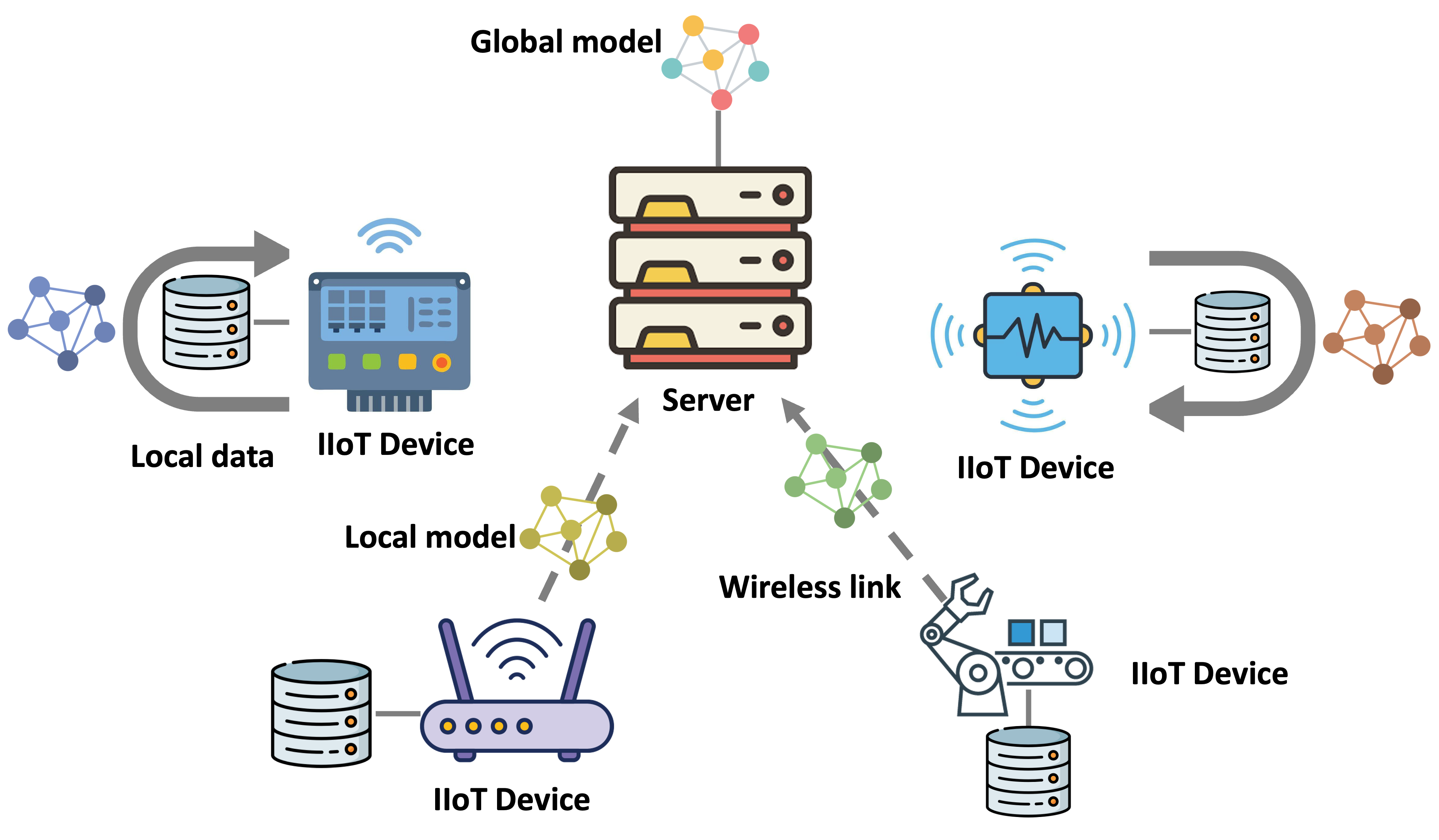}
			\caption{Federated learning system over IIoT networks}
			\label{fig:system_model}
		\end{figure}

		We consider an FL system deployed over an IIoT network consisting of a central 
		server and $N$ heterogeneous IIoT devices, denoted by the set 
		$\mathcal{N} = \{1, \dots, N\}$ as illustrated in Fig.~\ref{fig:system_model}. Each device $n \in \mathcal{N}$ holds a private 
		local dataset $\mathcal{D}_n$ of size $D_n = |\mathcal{D}_n|$, and the global 
		dataset is given by $\mathcal{D} = \bigcup_{n=1}^N \mathcal{D}_n$.

		The objective of FL is to collaboratively train a global model that minimizes 
		the average loss over $\mathcal{D}$ without exposing the local data of any 
		device. Given a loss function $f(\mathbf{w}; x)$ for model parameters $\mathbf{w}$ 
		and data sample $x$, the global loss function is defined as
		\begin{align}
			F(\mathbf{w}) = \frac{1}{|\mathcal{D}|} \sum_{x \in \mathcal{D}} 
			f(\mathbf{w}; x). \label{eq:g_loss}
		\end{align}

		\begin{figure}[t]
			\centering
			\subfloat[Local update]{\label{subfig:local_update}
				\includegraphics[width=0.45\columnwidth]{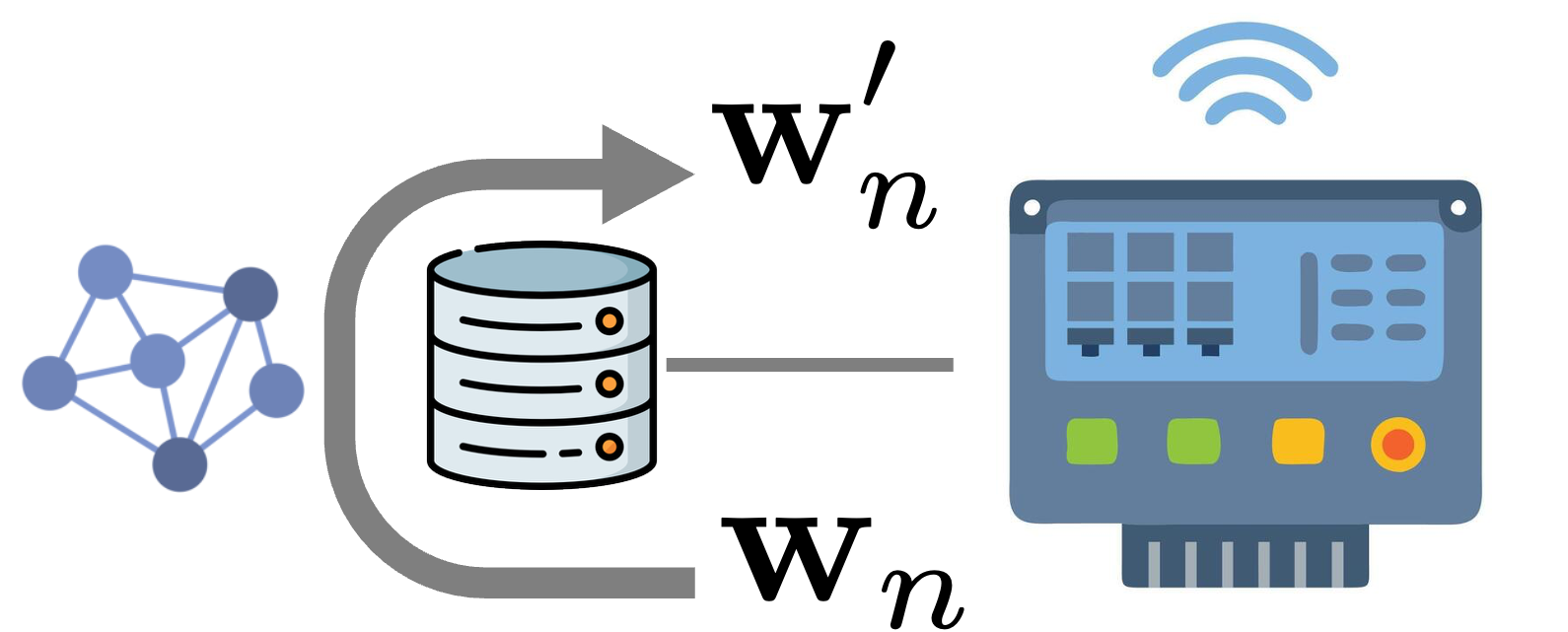}}
				\hfill
			\subfloat[Transmission of local model]{\label{subfig:local_tran}
				\includegraphics[width=0.45\columnwidth]{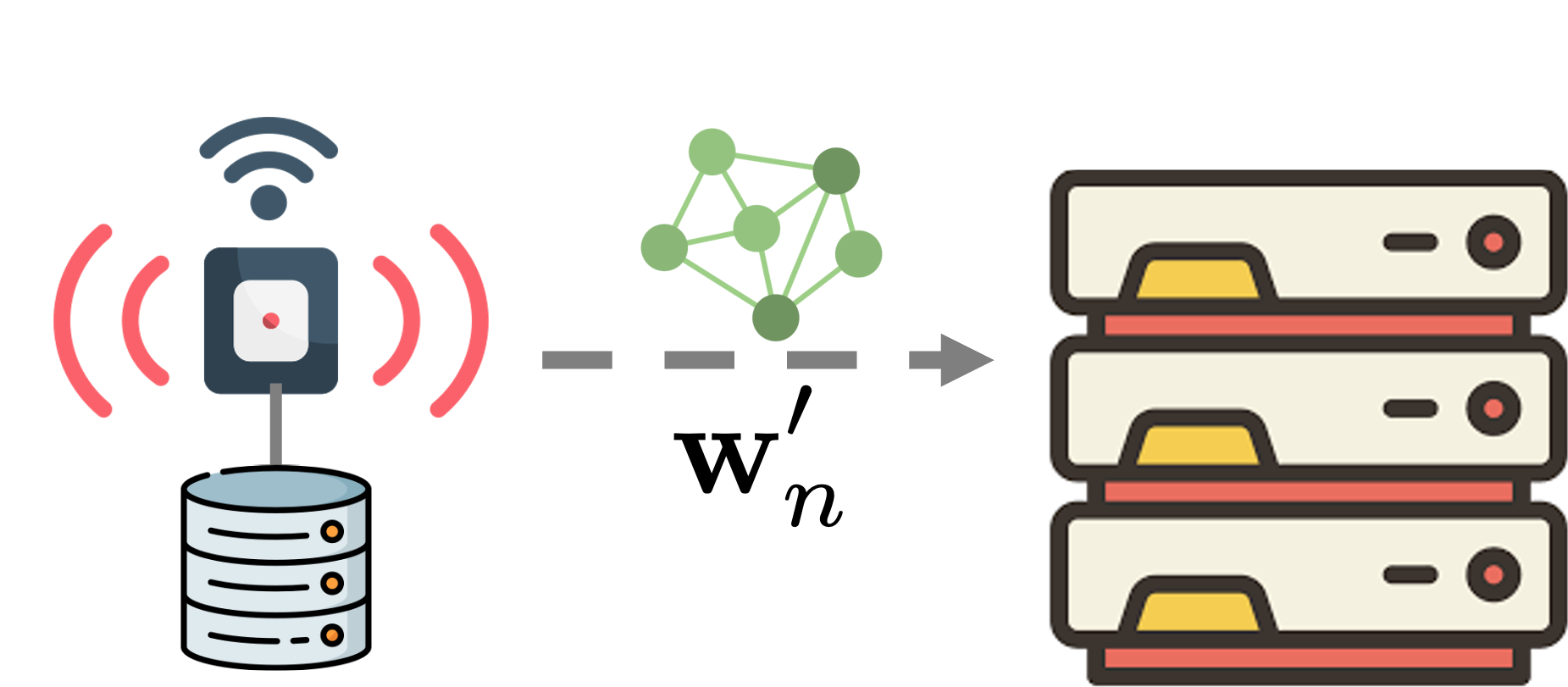}}
				
			\subfloat[Global update]{\label{subfig:global_update}
				\includegraphics[width=0.45\columnwidth]{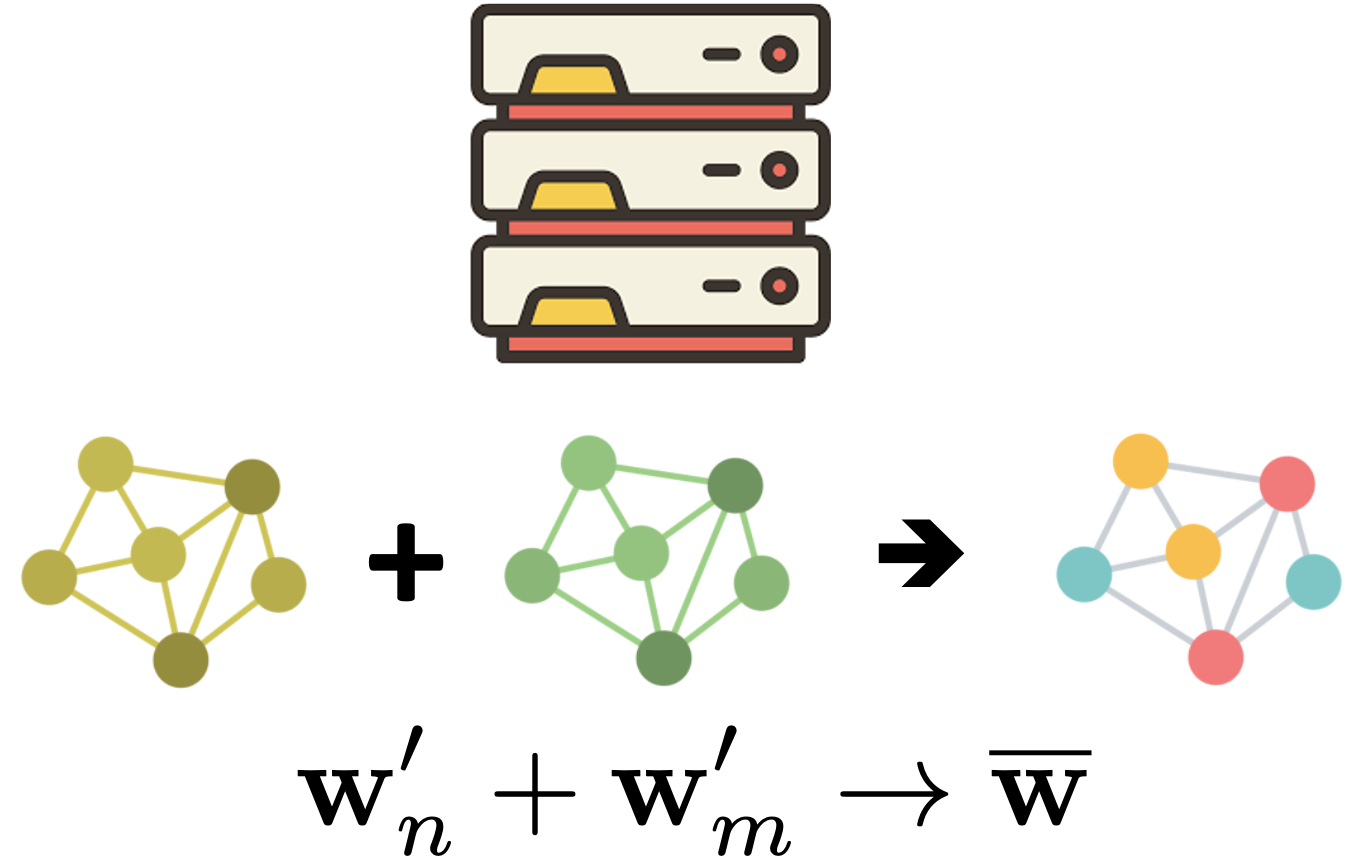}}
				\hfill
			\subfloat[Transmission of global model]{\label{subfig:global_tran}
				\includegraphics[width=0.45\columnwidth]{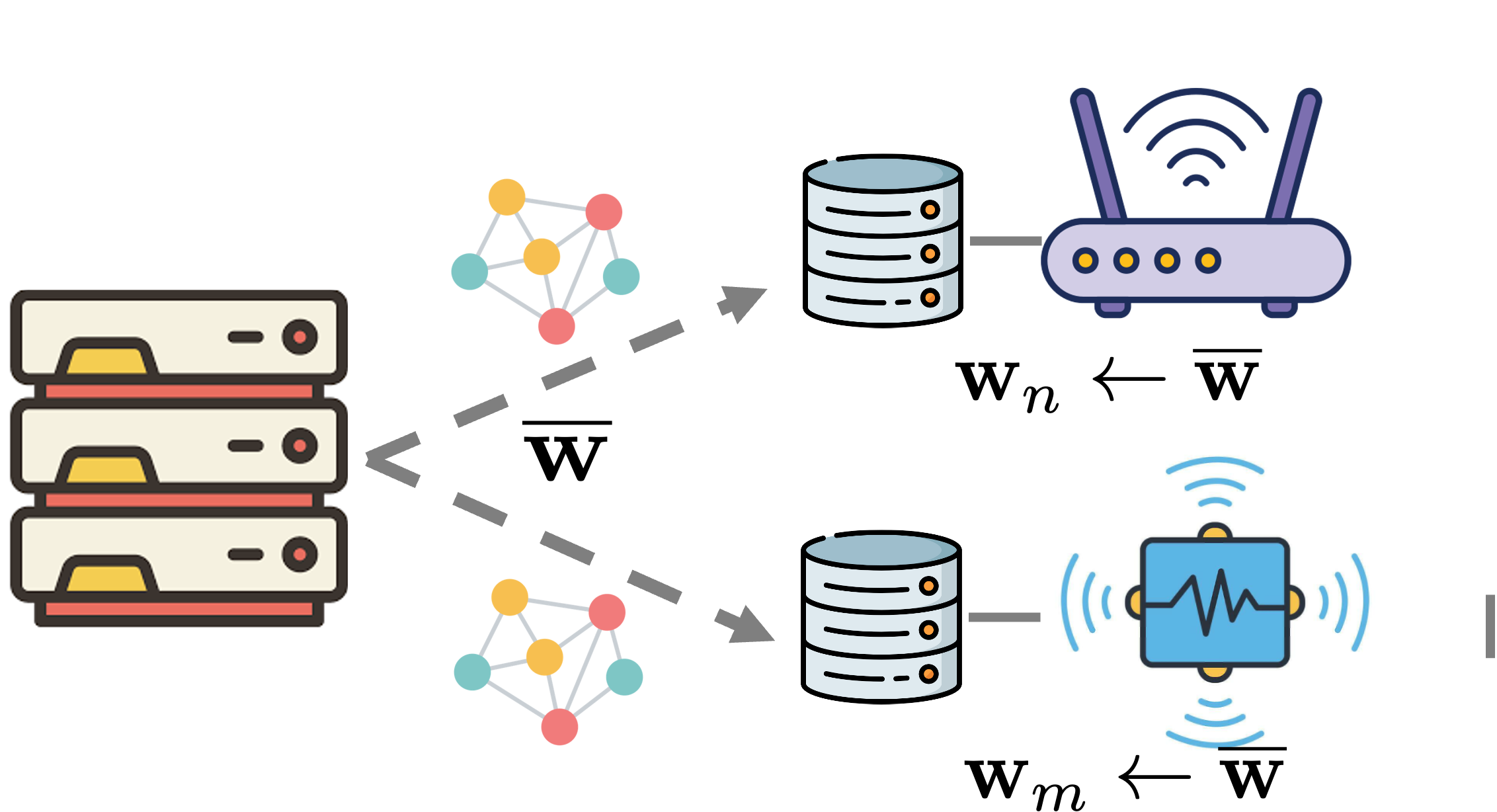}}
			\caption{Procedures of federated learning}
			\label{fig:proc_FL}
		\end{figure}	

		To minimize $F(\mathbf{w})$ while preserving data privacy, the FL system 
		proceeds iteratively over multiple communication rounds through four stages: 
		global model broadcasting, local updating, local model transmission, and global 
		aggregation, as illustrated in Fig.~\ref{fig:proc_FL}.

		At the start of round $k$, the server selects a subset of devices 
		$\mathcal{N}_k \subseteq \mathcal{N}$ with $|\mathcal{N}_k| = q N$, where 
		$q \in (0, 1]$ is the participation fraction, and broadcasts the current global 
		model $\overline{\mathbf{w}}_k$ to all selected devices. Each device 
		$n \in \mathcal{N}_k$ initializes its local model as 
		$\mathbf{w}_{k,n} = \overline{\mathbf{w}}_k$ and performs local training via 
		mini-batch stochastic gradient descent (SGD). For a mini-batch 
		$\mathcal{Q}_{n} \subset \mathcal{D}_n$ of size $Q$, the stochastic gradient 
		computed by device $n$ is
		\begin{align}
			\mathbf{g}_{k,n} = \frac{1}{Q} \sum_{x \in \mathcal{Q}_{n}} 
			\nabla f\left(\mathbf{w}_{k,n}; x\right).
		\end{align}
		Each device then updates its local model over $E$ epochs, where a single update 
		step is given by
		\begin{align}
			\mathbf{w}^{\text{next}}_{k,n} = \mathbf{w}^{\text{current}}_{k,n} 
			- \eta \mathbf{g}_{k,n},
		\end{align}
		with $\eta$ denoting the learning rate.

		Upon completing local training, each device $n$ transmits its updated local 
		model $\mathbf{w}'_{k,n}$ to the server via the uplink wireless channel. The 
		server then aggregates the received models as
		\begin{align}
			\overline{\mathbf{w}}_{k+1} = \sum_{n \in \mathcal{N}_k} 
			\frac{D_n}{\sum_{n' \in \mathcal{N}_k} D_{n'}} \mathbf{w}'_{k,n}.
		\end{align}
		This process repeats until the global model achieves a target optimality gap of 
		$\epsilon$. Based on convergence analyses in~\cite{li2019convergence, 
		cho2022towards}, we assume that $K$ communication rounds are required to reach 
		this target. 
		
		\subsection{Computing Time and Communication Time}
		In each FL round, round time comprises the local computing time 
		at the IIoT devices and communication time. Since the edge server has 
		substantial computational and communication resources, the time for global aggregation and transmission of global model is 
		considered negligible.

		\subsubsection{Computing Time}
		IIoT devices exhibit heterogeneous computing capabilities, characterized by CPU 
		cycle frequencies drawn from finite set $\mathcal{C}$. Let $c_n \in \mathcal{C}$ denote the CPU cycle frequency of device $n$. Assuming that processing one data sample requires $\rho$ CPU 
		cycles, the computing time for device $n$ to complete $E$ local epochs over its 
		dataset of size $D_n$ is
		\begin{align}
			\zeta_n = \frac{\rho E D_n}{c_n}. \label{eq:comp_time}
		\end{align}
		Without loss of generality, devices are indexed in ascending order of computing 
		time, i.e., $\zeta_n \leq \zeta_{m}$ for all $n < m$.
		
		\subsubsection{Communication Time}
		The transmission time for the server to broadcast the global model is 
		considered negligible due to the server's high transmission 
		power~\cite{yang2020energy, feng2021min}. Accordingly, the communication 
		bottleneck is dominated by the transmission of local models by IIoT devices. Let $B$ 
		denote the total available system bandwidth. When a bandwidth of $b_n \leq B$ 
		is allocated to device $n$ in round $k$, the transmission rate is
		\begin{align}
			R_{k,n}(b_n) = b_n \log_2\!\left(1 + \frac{|h_{k,n}|^2 P}{b_n N_0}
			\right), \label{eq:capacity}
		\end{align}
		where $P$ is the transmit power, $N_0$ is the noise power spectral density, and 
		$h_{k,n}$ is the channel fading coefficient. We assume that the channel remains 
		static within each round but varies across rounds~\cite{guo2022joint, 
		yao2024wireless}. Given that the size of local model is equal to $W$ bits, the transmission time for device $n$ in round $k$ is
		\begin{align}
			\delta_{k,n}(b_n) = \frac{W}{b_n \log_2\!\left(1 + 
			\dfrac{|h_{k,n}|^2 P}{b_n N_0}\right)}. \label{eq:comm_time}
		\end{align}

        \section{Problem Formulation}\label{sec:problem_formulation}
        In each round, IIoT devices can begin transmission only after completing 
		their local computations. Due to heterogeneous computing capabilities, devices 
		finish their local updates at different times. To exploit this heterogeneity and improve communication 	efficiency, we allow bandwidth to be reassigned once a device 
		completes its transmission.
		
		Under this reallocation strategy, the selected devices $\mathcal{N}_k$ can be 
		partitioned into multiple ordered subsets, where each subset is granted exclusive 
		access to the full bandwidth $B$. Once all devices in one subset complete their 
		transmissions, the next subset begins. We formalize this as follows.
		
		\begin{definition}
			A bandwidth allocation policy $\pi$ is a mapping that assigns a partition of 
			$\mathcal{N}_k$ and a corresponding bandwidth vector to each subset:
			\begin{align}
				\pi_{\mathcal{N}_k} = (\mathcal{P}, \mathcal{B}),
			\end{align}
			where $\mathcal{P} = \{\mathcal{S} \mid \mathcal{S} \subset \mathcal{N}_k\}$ is 
			an ordered partition of $\mathcal{N}_k$, and 
			$\mathcal{B} = \{\mathbf{b}_{\mathcal{S}} \in \mathbb{R}^{|\mathcal{S}|} \mid 
			\forall \mathcal{S} \in \mathcal{P}\}$ is the set of bandwidth allocation 
			vectors, each satisfying the total bandwidth constraint $B$.
		\end{definition}

		We denote by $\phi$ the class of single-partition policies, i.e., 
		$\phi_{\mathcal{N}_k} = \bigl(\{\mathcal{N}_k\}, \{\mathbf{b}_{\mathcal{N}_k}
		\}\bigr)$, in which all selected devices share the bandwidth simultaneously. 
		
		The subsets in $\mathcal{P}$ are ordered by transmission sequence. For a device 
		$n$ belonging to subset $\mathcal{S}$, its transmission cannot begin until all 
		devices in the preceding subset have completed their transmissions. The round 
		time of device $n \in \mathcal{S}$ in round $k$ is therefore given by
		\begin{align}
			\tau_{k,n}(\pi_{\mathcal{N}_k}) = \max\bigl\{\zeta_n,\, 
			\tau_{k,\mathcal{S}'}(\pi_{\mathcal{N}_k})\bigr\} + 
			\delta_{k,n}(b_n(\pi_{\mathcal{N}_k})), \label{eq:round_time_client}
		\end{align}
		where $\mathcal{S}'$ is the preceding subset of $\mathcal{S}$, and 
		$\tau_{k,\mathcal{S}'}(\pi_{\mathcal{N}_k})$ denotes the completion time of devices' transmission in subset $\mathcal{S}'$, defined as
		\begin{align}
			\tau_{k,\mathcal{S}'}(\pi_{\mathcal{N}_k}) = \max_{n' \in \mathcal{S}'} 
			\tau_{k,n'}(\pi_{\mathcal{N}_k}). \label{eq:round_time_part}
		\end{align}
		Since the round completes when the last subset finishes, the round time for 
		round $k$ is
		\begin{align}
			\tau_k(\pi_{\mathcal{N}_k}) = \max_{\mathcal{S} \in \mathcal{P}}\, 
			\tau_{k,\mathcal{S}}(\pi_{\mathcal{N}_k}). \label{eq:round_time}
		\end{align}
		Summing over all $K$ rounds, the total training time is
		\begin{align}
			\tau = \sum_{k=1}^{K} \max_{\mathcal{S} \in \mathcal{P}}\, 
			\tau_{k,\mathcal{S}}(\pi_{\mathcal{N}_k}).
		\end{align}

		The round time is lower-bounded by the time for the device with the longest 
		computation time to complete its transmission when allocated the full bandwidth 
		$B$. Formally,
		\begin{align}
			\tau_k(\pi_{\mathcal{N}_k}) \geq \zeta_{\hat{n}} + \delta_{k,\hat{n}}(B),
		\end{align}
		where $\hat{n} = \arg\max_{n \in \mathcal{N}_k} \zeta_n$. We denote this lower 
		bound as $\tau_k^{\text{LB}} = \zeta_{\hat{n}} + \delta_{k,\hat{n}}(B)$. To 
		isolate the effect of bandwidth allocation, we define the \emph{round time gap}
		\begin{align}
			\Delta_k(\pi_{\mathcal{N}_k}) = \max_{\mathcal{S} \in \mathcal{P}}\, 
			\tau_{k,\mathcal{S}}(\pi_{\mathcal{N}_k}) - \tau_k^{\text{LB}}, 
			\label{eq:gap}
		\end{align}
		and seek to minimize its average over all rounds:
		\begin{align}
			\bar{\Delta} = \frac{1}{K} \sum_{k=1}^{K} \Delta_k(\pi_{\mathcal{N}_k}). 
			\label{eq:avg_gap}
		\end{align}
		The optimal bandwidth allocation policy is obtained by solving
		\begin{align}
			\textbf{P:} \quad \min_{\pi \in \mathcal{F}}\;\bar{\Delta}, 
			\label{prob:min_bw}
		\end{align}
		where $\mathcal{F}$ denotes the set of all feasible policies.

		\section{Bandwidth Allocation Policy with Device Partitioning }\label{sec:bw_partition}
			
		Solving \eqref{prob:min_bw} requires enumerating all feasible partitions of 
		$\mathcal{N}_k$, resulting in combinatorial complexity that grows exponentially 
		with $|\mathcal{N}_k|$. To gain insight toward a tractable solution, we first 
		analyze the two-device case $|\mathcal{N}_k| = 2$.
		
		\begin{proposition}\label{prop1}
			For a given communication round $k$, let $\mathcal{N}_k = \{i, j\}$ with $\zeta_i \leq \zeta_j$. The optimal policy $\pi^*$ minimizing the round time gap is
			\begin{align}
				\pi^* = \begin{cases}
					\bigl(\{i\}, \{j\},\, [B],\, [B]\bigr) 
						& \text{if } \tau_{k,i}^* \leq \tau^{\text{th}}, \\[4pt]
					\bigl(\{i,j\},\, [B - \tilde{b}_j,\, \tilde{b}_j]\bigr) 
						& \text{otherwise},
				\end{cases} \label{eq:min_round_t}
			\end{align}
			where $\tau^*_{k,i} = \zeta_i + \delta_{k,i}(B)$ is the minimum round time 
			for device $i$ when allocated the full bandwidth, and the threshold is
			\begin{align}
				\tau^{\text{th}} = \zeta_j + \delta_{k,j}(\tilde{b}_j) - \delta_{k,j}(B),
			\end{align}
			with $\tilde{b}_j$ denoting the bandwidth allocated to device $j$ such that 
			both devices complete transmission simultaneously.
		\end{proposition}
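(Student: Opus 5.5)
The plan is a direct case analysis over the only feasible policies for $\mathcal{N}_k=\{i,j\}$. There are three of them:
\begin{itemize}
\item the single-partition policy $(\{i,j\},[B-b_j,b_j])$ with $b_j\in(0,B)$;
\item the two-subset ordering $(\{i\},\{j\},[B],[B])$;
\item the reverse ordering $(\{j\},\{i\},[B],[B])$.
\end{itemize}
Since $\tau_k^{\text{LB}}$ does not depend on the policy, minimizing the gap \eqref{eq:gap} is the same as minimizing the round time \eqref{eq:round_time}. I would compute the best round time in each class and compare.

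\textbf{Preliminary: $\delta_{k,n}$ is strictly decreasing.} First show that $R_{k,n}(b)=b\log_2(1+a/b)$, with $a=|h_{k,n}|^2P/N_0$, is strictly increasing in $b>0$. Writing $x=a/b$, we have $R=a\,g(x)$ with $g(x)=\ln(1+x)/x$ up to the factor $1/\ln 2$. The function $g$ is strictly decreasing in $x$, because
\begin{align}
g'(x)\propto \frac{x}{1+x}-\ln(1+x)<0 .
\end{align}
In addition $R\to 0$ as $b\to 0$. Hence $\delta_{k,n}$ in \eqref{eq:comm_time} is continuous, strictly decreasing, and tends to $\infty$ as $b\to0^+$.

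\textbf{Single-partition class.} Here the round time is
\begin{align}
\max\{\zeta_i+\delta_{k,i}(B-b_j),\;\zeta_j+\delta_{k,j}(b_j)\}.
\end{align}
The first term is continuous and increasing in $b_j$ and blows up as $b_j\to B$. The second is continuous and decreasing in $b_j$ and blows up as $b_j\to0$. By the intermediate value theorem and strict monotonicity there is a unique crossing $\tilde b_j\in(0,B)$, which is the simultaneous-completion bandwidth of the statement. A standard max-of-monotone-functions argument shows this crossing minimizes the maximum. The optimal value of this class is therefore $T_{\text{sim}}=\zeta_j+\delta_{k,j}(\tilde b_j)$.

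\textbf{Sequential classes.} For the order $(\{i\},\{j\})$, \eqref{eq:round_time_client} gives
\begin{align}
T_{\text{seq}}=\max\{\zeta_j,\tau^*_{k,i}\}+\delta_{k,j}(B).
\end{align}
For the reverse order, the round time is at least $\zeta_j+\delta_{k,j}(B)+\delta_{k,i}(B)$. Using $\zeta_i\le\zeta_j$, this bound is $\ge T_{\text{seq}}$, so the reverse order can be discarded.

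\textbf{Comparison.} We have $T_{\text{seq}}\le T_{\text{sim}}$ if and only if
\begin{align}
\max\{\zeta_j,\tau^*_{k,i}\}\le \zeta_j+\delta_{k,j}(\tilde b_j)-\delta_{k,j}(B)=\tau^{\text{th}}.
\end{align}
Since $\tilde b_j<B$ and $\delta_{k,j}$ is decreasing, $\tau^{\text{th}}>\zeta_j$. So the condition reduces exactly to $\tau^*_{k,i}\le\tau^{\text{th}}$, which yields \eqref{eq:min_round_t}. In the tie case either policy is optimal, consistent with the statement's choice of the sequential policy.

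\textbf{Main obstacle.} The delicate step is the monotonicity of $R_{k,n}$ in $b$ under the bandwidth-dependent SNR, together with the boundary behaviour as $b\to0$. These are what guarantee that $\tilde b_j$ exists, is unique and interior, and that equalization is optimal. I would settle both through the single-variable function $g(x)=\ln(1+x)/x$ as above.
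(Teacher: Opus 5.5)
Your proof is correct and takes the same route as the paper's. You split the feasible policies into simultaneous sharing, which is optimized by equal completion at $\tilde b_j$, and sequential full-bandwidth allocation, then compare $\max\{\zeta_j,\tau^*_{k,i}\}+\delta_{k,j}(B)$ with $\zeta_j+\delta_{k,j}(\tilde b_j)$. Your version is also more complete than the paper's, because it proves points the paper leaves implicit: the monotonicity of $\delta_{k,n}$ under bandwidth-dependent SNR, the existence and uniqueness of $\tilde b_j$, the dismissal of the reverse order $(\{j\},\{i\})$, and the fact $\tau^{\text{th}}>\zeta_j$ that lets the $(\cdot)^+$ be dropped.
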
 

		\begin{IEEEproof}
			All feasible policies can be classified into two cases: those in which devices 
			$i$ and $j$ share the bandwidth simultaneously, and those in which each device 
			is allocated the bandwidth exclusively in sequence. In the latter case, 
			allocating the full bandwidth to each device individually minimizes per-device 
			communication time, since $\delta_{k,n}(b_n)$ is strictly decreasing in $b_n$.
			
			When devices $i$ and $j$ share the bandwidth simultaneously, the round time 
			is minimized when both devices complete their transmissions at the same time. 
			To see this, note that the round time equals the completion time of the 
			later-finishing device. If one device finishes earlier than the other, 
			transferring bandwidth from the faster device to the slower one reduces the 
			round time. This reallocation can be applied repeatedly until both devices 
			complete transmission simultaneously, at which point the round time is 
			minimized.

			It therefore suffices to compare the policy that ensures simultaneous 
			completion of transmission against the policy that allocates the full 
			bandwidth to each device sequentially.

			The round time gap of sequential allocation is given by
				\begin{align}
					\Delta_k(\pi_{\{i,j\}}) &= \max\{\tau^*_{k,i},\, \zeta_j\} + 
					\delta_{k,j}(B) - \tau^{\text{LB}}_k, \\
					&= \left(\tau^*_{k,i} - \zeta_j\right)^+,
				\end{align}
				where $(x)^+ = \max\{x, 0\}$ denotes the rectified linear unit function.

				The round time gap of the simultaneous allocation with equal completion times 
				is given by
				\begin{align}
					\Delta_k(\pi_{\{i,j\}}) &= \zeta_j + \delta_{k,j}(\tilde{b}_j) - 
					\tau^{\text{LB}}_k, \\
					&= \delta_{k,j}(\tilde{b}_j) - \delta_{k,j}(B).
				\end{align}

				Sequential allocation achieves a smaller or equal round time gap if and only if
				\begin{align}
					\left(\tau^*_{k,i} - \zeta_j\right)^+ \leq \delta_{k,j}(\tilde{b}_j) - 
					\delta_{k,j}(B). \label{ineq:cond_prop1}
				\end{align}
				Rewriting \eqref{ineq:cond_prop1} gives
				\begin{align}
					\tau^*_{k,i} - \delta_{k,j}(B) \leq \zeta_j + \delta_{k,j}(\tilde{b}_j),
				\end{align}
				which is equivalent to $\tau_{k,i}^* \leq \tau^{\text{th}}$, completing the 
				proof.
			\end{IEEEproof}
				
		Proposition~\ref{prop1} reveals that when device $i$ completes both computation 
		and communication quickly, sequential allocation outperforms simultaneous 
		sharing. This stands in contrast to the result of~\cite{shi2020device}, which 
		showed that simultaneous allocation with equal completion times is optimal under 
		a single partition. Proposition~\ref{prop1} demonstrates that partitioning 
		devices into multiple subsets can strictly outperform the best single-partition 
		policy. This insight is generalized to arbitrary $|\mathcal{N}_k|$ in the 
		following theorem.
		
		\begin{theorem}[Superiority of Partitioning]\label{prop2}
			If there exists a subset $\mathcal{S} \subset \mathcal{N}_k$ such that
			\begin{align}
				\tau_{k,\mathcal{S}}\!\left(\phi^*_{\mathcal{S}}\right) \leq 
				\min_{n \in \mathcal{S}^c} \zeta_n, \label{ineq:cond_prop2}
			\end{align}
			where $\mathcal{S}^c = \mathcal{N}_k \setminus \mathcal{S}$ and 
			$\phi^*_{\mathcal{S}} = \bigl(\{\mathcal{S}\}, \{(\mathbf{b}^k_{\mathcal{S}})^*
			\}\bigr)$, then
			\begin{align}
				\Delta_k\!\left(\theta^*_{\mathcal{N}_k}\right) \leq 
				\Delta_k\!\left(\phi^*_{\mathcal{N}_k}\right), \label{ineq:prop2}
			\end{align}
			where $\theta^*_{\mathcal{N}_k} = \bigl(\{\mathcal{S}, \mathcal{S}^c\},\, 
			\{(\mathbf{b}^k_{\mathcal{S}})^*, (\mathbf{b}^k_{\mathcal{S}^c})^*\}\bigr)$ 
			and $\phi^*_{\mathcal{N}_k} = \bigl(\{\mathcal{N}_k\},\, 
			\{(\mathbf{b}^k_{\mathcal{N}_k})^*\}\bigr)$.

			In other words, whenever condition~\eqref{ineq:cond_prop2} holds, partitioning 
			$\mathcal{N}_k$ into $\mathcal{S}$ and $\mathcal{S}^c$ achieves a strictly 
			lower round time gap than the optimal single-partition policy.
		\end{theorem}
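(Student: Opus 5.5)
The plan is to compare round times directly rather than round time gaps. Both $\theta^*_{\mathcal{N}_k}$ and $\phi^*_{\mathcal{N}_k}$ act on the same set $\mathcal{N}_k$, so they share the same lower bound $\tau_k^{\text{LB}}=\zeta_{\hat{n}}+\delta_{k,\hat{n}}(B)$. By \eqref{eq:gap}, inequality \eqref{ineq:prop2} is therefore equivalent to $\tau_k(\theta^*_{\mathcal{N}_k})\le\tau_k(\phi^*_{\mathcal{N}_k})$. I will prove this in two steps: first compute $\tau_k(\theta^*_{\mathcal{N}_k})$ explicitly using condition \eqref{ineq:cond_prop2}, and then bound it by a monotonicity argument on bandwidth.

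\emph{Step 1 (the first subset does not delay the second).} Under $\theta^*_{\mathcal{N}_k}$, the subset $\mathcal{S}$ transmits first. It receives its optimal single-partition allocation on the full bandwidth, so it completes at $\tau_{k,\mathcal{S}}(\phi^*_{\mathcal{S}})$; this uses \eqref{eq:round_time_client} with no preceding subset. Now take any $n\in\mathcal{S}^c$. By \eqref{ineq:cond_prop2}, $\max\{\zeta_n,\tau_{k,\mathcal{S}}(\phi^*_{\mathcal{S}})\}=\zeta_n$. Hence every device of $\mathcal{S}^c$ has round time $\zeta_n+\delta_{k,n}(b_n)$, exactly as if $\mathcal{S}^c$ were served alone on bandwidth $B$. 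This gives $\tau_{k,\mathcal{S}^c}(\theta^*_{\mathcal{N}_k})=\tau_{k,\mathcal{S}^c}(\phi^*_{\mathcal{S}^c})$. Every device of $\mathcal{S}^c$ finishes no earlier than $\min_{n\in\mathcal{S}^c}\zeta_n$, and that quantity is at least $\tau_{k,\mathcal{S}}(\phi^*_{\mathcal{S}})$. Applying \eqref{eq:round_time}, we obtain $\tau_k(\theta^*_{\mathcal{N}_k})=\tau_{k,\mathcal{S}^c}(\phi^*_{\mathcal{S}^c})$.

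\emph{Step 2 (restriction and monotonicity).} Let $(\mathbf{b}^k_{\mathcal{N}_k})^*$ be the optimal single-partition allocation, and let $\mathbf{b}'$ be its restriction to $\mathcal{S}^c$. Then $\sum_{n\in\mathcal{S}^c}b'_n\le B$, and
\begin{align}
\max_{n\in\mathcal{S}^c}\bigl(\zeta_n+\delta_{k,n}(b'_n)\bigr)\le\tau_k(\phi^*_{\mathcal{N}_k}).
\end{align}
Next, distribute the leftover bandwidth $B-\sum_{n\in\mathcal{S}^c}b'_n$ among the devices of $\mathcal{S}^c$ to obtain a feasible full-bandwidth allocation $\mathbf{b}''$ for $\mathcal{S}^c$. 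As in the proof of Proposition~\ref{prop1}, $\delta_{k,n}(\cdot)$ is strictly decreasing, so each device's round time can only decrease when moving from $\mathbf{b}'$ to $\mathbf{b}''$. Optimality of $\phi^*_{\mathcal{S}^c}$ over all full-bandwidth allocations on $\mathcal{S}^c$ then yields
\begin{align}
\tau_{k,\mathcal{S}^c}(\phi^*_{\mathcal{S}^c})\le\max_{n\in\mathcal{S}^c}\bigl(\zeta_n+\delta_{k,n}(b''_n)\bigr)\le\tau_k(\phi^*_{\mathcal{N}_k}).
\end{align}
Combined with Step~1, this proves \eqref{ineq:prop2}.

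\emph{Strictness (main obstacle).} The closing remark of the theorem claims a strict decrease. This requires $\mathcal{S}\neq\emptyset$, and it requires that every device of $\mathcal{S}$ receives positive bandwidth under $(\mathbf{b}^k_{\mathcal{N}_k})^*$. The latter holds because $\delta_{k,n}(0)=\infty$ while $\tau_k(\phi^*_{\mathcal{N}_k})$ is finite. Under these two facts the leftover bandwidth is strictly positive. I would split it among \emph{all} devices of $\mathcal{S}^c$ that attain the maximum in the first display above; there are finitely many of them. Each maximizer's completion time then strictly decreases. Since there are finitely many devices, the new maximum over $\mathcal{S}^c$ is strictly smaller than the old one. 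The remaining care point is the recursive definition \eqref{eq:round_time_client}: Step~1 must correctly discharge the $\max$ with the preceding subset, and this is exactly where condition \eqref{ineq:cond_prop2} is used.
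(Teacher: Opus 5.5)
Your proof is correct and follows the paper's route. Condition~\eqref{ineq:cond_prop2} removes the waiting term in \eqref{eq:round_time_client}, so $\tau_k(\theta^*_{\mathcal{N}_k})$ equals the round time of $\mathcal{S}^c$ served alone on $B$; restricting $(\mathbf{b}^k_{\mathcal{N}_k})^*$ to $\mathcal{S}^c$, returning the freed bandwidth to $\mathcal{S}^c$, and invoking optimality then gives the bound. Your treatment is tighter than the paper's, which gives all freed bandwidth to a single $\tilde{n}=\arg\max_{n\in\mathcal{N}_k}\{\zeta_n+\delta_{k,n}(b^*_n(\phi^*_{\mathcal{N}_k}))\}$ (tacitly assumed to lie in $\mathcal{S}^c$) and so only reaches the non-strict inequality when other devices of $\mathcal{S}^c$ tie with $\tilde{n}$, as they do since the single-partition optimum equalizes completion times; your splitting among all maximizers, with $\delta_{k,n}(b)\to\infty$ as $b\to 0$ forcing positive bandwidth on $\mathcal{S}$, is what actually supports the ``strictly lower'' claim.
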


		\begin{IEEEproof}
			From~\eqref{eq:round_time},
				\begin{align}
					\tau_k(\theta^*_{\mathcal{N}_k}) = \max\bigl\{\tau_{k,\mathcal{S}}
					(\theta^*_{\mathcal{N}_k}),\, \tau_{k,\mathcal{S}^c}
					(\theta^*_{\mathcal{N}_k})\bigr\}.
				\end{align}
			
			Since $\theta^*_{\mathcal{N}_k}$ is a policy which partitions $\mathcal{N}_k$ into $\mathcal{S}$ and $\mathcal{S}^c$ and allocates entire bandwidth to each part optimally, the round time for each subset is equal to that of single partition policy with optimal bandwidth allocation. In other words,
			\begin{align}
				\tau_{k, \mathcal{S}}( \theta^*_{\mathcal{N}_k}) = \tau_{k, \mathcal{S}} (\phi^*_{\mathcal{S}}) \label{eq:prop2_2}
			\end{align}

			Since $\delta_{k,n}(b^*_n(\theta^*_{\mathcal{N}_k})) > 0 $ for all $n$,
			\begin{align}
				\tau_{k, \mathcal{S}^{c}} ( \theta^*_{\mathcal{N}_k}) > \max_{n \in \mathcal{S}^c} \zeta_n
			\end{align}
			Since $\delta_{k,n}(b^*_n(\theta^*_{\mathcal{N}_k})) > 0$ for all $n$,
			\begin{align}
				\tau_{k,\mathcal{S}^c}(\theta^*_{\mathcal{N}_k}) > 
				\max_{n \in \mathcal{S}^c} \zeta_n \geq \min_{n \in \mathcal{S}^c} \zeta_n 
				\geq \tau_{k,\mathcal{S}}(\theta^*_{\mathcal{N}_k}),
			\end{align}
			where the last inequality follows from~\eqref{ineq:cond_prop2} 
			and~\eqref{eq:prop2_2}. 
			Hence, we have $\tau_{k, \mathcal{S}^{c}} ( \theta^*_{\mathcal{N}_k}) >\tau_{k, \mathcal{S}}( \theta^*_{\mathcal{N}_k})$.  
			Therefore, the round time using $\theta^*_{\mathcal{N}_k}$ becomes
			\begin{align}
				\tau_k( \theta^*_{\mathcal{N}_k}) = \tau_{k, \mathcal{S}^{c}} ( \theta^*_{\mathcal{N}_k}). \label{eq:prop2_3}
			\end{align}

			Furthermore, using \eqref{eq:round_time_client} and $\underset{n \in \mathcal{S}^c}{\min} \zeta_n \geq \tau_{k, \mathcal{S}}( \theta^*_{\mathcal{N}_k})$ , we can rewrite \eqref{eq:prop2_3} as 
			\begin{align}
				\tau_k( \theta^*_{\mathcal{N}_k}) & = \max_{n \in \mathcal{S}^{c} } \left\lbrace \zeta_n + \delta_{k,n} ( b^*_n(\theta^*_{\mathcal{N}_k}) ) \right\rbrace. \label{eq:prop2_1}
			\end{align}
			
			For the single-partition policy, the round time using $\phi^*_{\mathcal{N}_k}$ can be written as
			\begin{align}
				\tau_k(\phi^*_{\mathcal{N}_k}) = \max_{n \in \mathcal{N}_k} \left\lbrace \zeta_n + \delta_{k,n}(b^*_n(\phi^*_{\mathcal{N}_k})) \right\rbrace .
			\end{align}
			Let $\tilde{n} = \underset{n \in \mathcal{N}_k}{\arg\max} \left\lbrace \zeta_n + \delta_{k,n}(b^*_n(\phi^*_{\mathcal{N}_k})) \right\rbrace$. Then, consider the following bandwidth allocation scheme for $\mathcal{S}^c$.
			\begin{align}
				\tilde{b}_n = \begin{cases}
					b^*_n(\phi^*_{\mathcal{N}_k}) & \text{ for } n \neq \tilde{n} , \\
					B - \sum_{n \neq \tilde{n}} b^*_n(\phi^*_{\mathcal{N}_k}) & \text{ for } n = \tilde{n}
				\end{cases}
			\end{align}
			Since the same amount of bandwidth is allocated for $n \neq \tilde{n}$, $\delta_{k, n}(\tilde{b}_{n}) = \delta_{k, n}(b^*_{n}(\phi^*_{\mathcal{N}_k}))$. Moreover, as $|\mathcal{S}^c| < |\mathcal{N}_k|$, $\tilde{b}_{\tilde{n}} > b^*_{\tilde{n}}(\phi^*_{\mathcal{N}_k})$, which leads to $\delta_{k, \tilde{n}}(\tilde{b}_{\tilde{n}}) < \delta_{k, \tilde{n}}(b^*_{\tilde{n}}(\phi^*_{\mathcal{N}_k}))$.
			
			Consequently, we have
			\begin{align}
				\tau_k(\phi^*_{\mathcal{N}_k}) \geq \max_{n \in \mathcal{S}^c}\left[ \zeta_n + \delta_{k,n}(\tilde{b}_n) \right]. \label{ineq:prop2_1}
			\end{align}
			Since $\theta^*_{\mathcal{N}_k}$ allocates bandwidth optimally to $\mathcal{S}^c$, we have
			\begin{align}
				\tau_k(\theta^*_{\mathcal{N}_k}) \leq \max_{n \in \mathcal{S}^c}\left[ \zeta_n + \delta_{k,n}(\tilde{b}_n) \right]. \label{ineq:prop2_2}
			\end{align}
			Combining \eqref{ineq:prop2_1} and \eqref{ineq:prop2_2} and subtracting both sides by $\tau^{\text{LB}}_k$ yields
			\begin{align}
				\tau_k(\theta^*_{\mathcal{N}_k}) - \tau_k^{\text{LB}} \leq \tau_k(\phi^*_{\mathcal{N}_k})- \tau_k^{\text{LB}},
			\end{align} which is equivalent to \eqref{ineq:prop2}.
		\end{IEEEproof}

		\begin{algorithm}[pt]
			\caption{Device Partitioning Bandwidth Policy} \label{alg:alg2}
			\begin{algorithmic}[1]
				\STATE \textbf{Input}: $\mathcal{N}_k, \mathcal{P}, \mathcal{B}$
				\STATE \textbf{Initialization}: $\mathcal{S} \leftarrow \emptyset$
				\FOR {$n\in \mathcal{N}_k $}
					\STATE $\mathcal{S} \leftarrow \mathcal{S} \cup \{ n \}$
					\STATE $\phi^*_{\mathcal{S}} = ( \{ \mathcal{S} \}, \{ \tilde{\mathbf{b}}^k_{\mathcal{S}}\})$
					\STATE Compute $\tau_{k,\mathcal{S}} ( \phi_{\mathcal{S}}^*) $
					\IF{$\tau_{k,\mathcal{S}} ( \phi_{\mathcal{S}}^*) \leq \zeta_{n+1}$}
						\STATE $\mathcal{P} \leftarrow \mathcal{P} \cup \mathcal{S}$
						\STATE $\mathcal{B} \leftarrow \mathcal{B} \cup \{ \tilde{\mathbf{b}}^k_{\mathcal{S}}\}$
						\IF{$ \mathcal{N}_k \backslash \mathcal{S} \neq \emptyset$}
							\STATE DPBP $( \mathcal{N}_k \backslash \mathcal{S}, \mathcal{P}, \mathcal{B})$ \\ \COMMENT{Recursive call of Algorithm \ref{alg:alg2}}
						\ENDIF
					\ENDIF
				\ENDFOR
				\IF {$\mathcal{S} \neq \emptyset$}
					\STATE $\mathcal{P} \leftarrow \mathcal{P} \cup \mathcal{S}$
					\STATE $\mathcal{B} \leftarrow \mathcal{B} \cup \{ \tilde{\mathbf{b}}^k_{\mathcal{S}}\}$
				\ENDIF
				\STATE \textbf{Output}: $\pi_{\mathcal{N}_k}= ( \mathcal{P} , \mathcal{B} )$
			\end{algorithmic}
		\end{algorithm}
		
        The key insight of Theorem~\ref{prop2} is that whenever a subset $\mathcal{S}$ 
		can complete all transmissions before the remaining devices finish their local 
		computations, partitioning is guaranteed to reduce the round time gap relative 
		to any single-partition policy with optimal bandwidth allocation.
        
        Since the feasible set of \eqref{prob:min_bw} grows exponentially with 
		$|\mathcal{N}_k|$, exhaustive search is computationally intractable in practice. 
		Nevertheless, condition~\eqref{ineq:cond_prop2} in Theorem~\ref{prop2} provides 
		a verifiable criterion for when partitioning is beneficial: it is always 
		advantageous when devices exhibit large gaps in their computing times.

		Building on this insight, we propose the \emph{Device Partitioning Bandwidth 
		Policy} (DPBP), whose pseudocode is given in Algorithm~\ref{alg:alg2}. DPBP 
		applies partitioning only when condition~\eqref{ineq:cond_prop2} is satisfied. 
		Starting from an empty subset $\mathcal{S}$, devices are added sequentially in 
		ascending order of computing time. After each addition, the condition is checked: 
		if satisfied, $\mathcal{S}$ is committed as one partition and DPBP is applied 
		recursively to the remaining devices $\mathcal{S}^c$; if not, the next device 
		is added and the check is repeated. If no valid subset is found after all devices 
		are considered, the remaining devices form the final subset. In the degenerate 
		case where this occurs at the initial call, all devices are placed in a single 
		partition, recovering the standard simultaneous allocation.

		\begin{figure}[pt]
			\centering
			\includegraphics[width=\columnwidth]{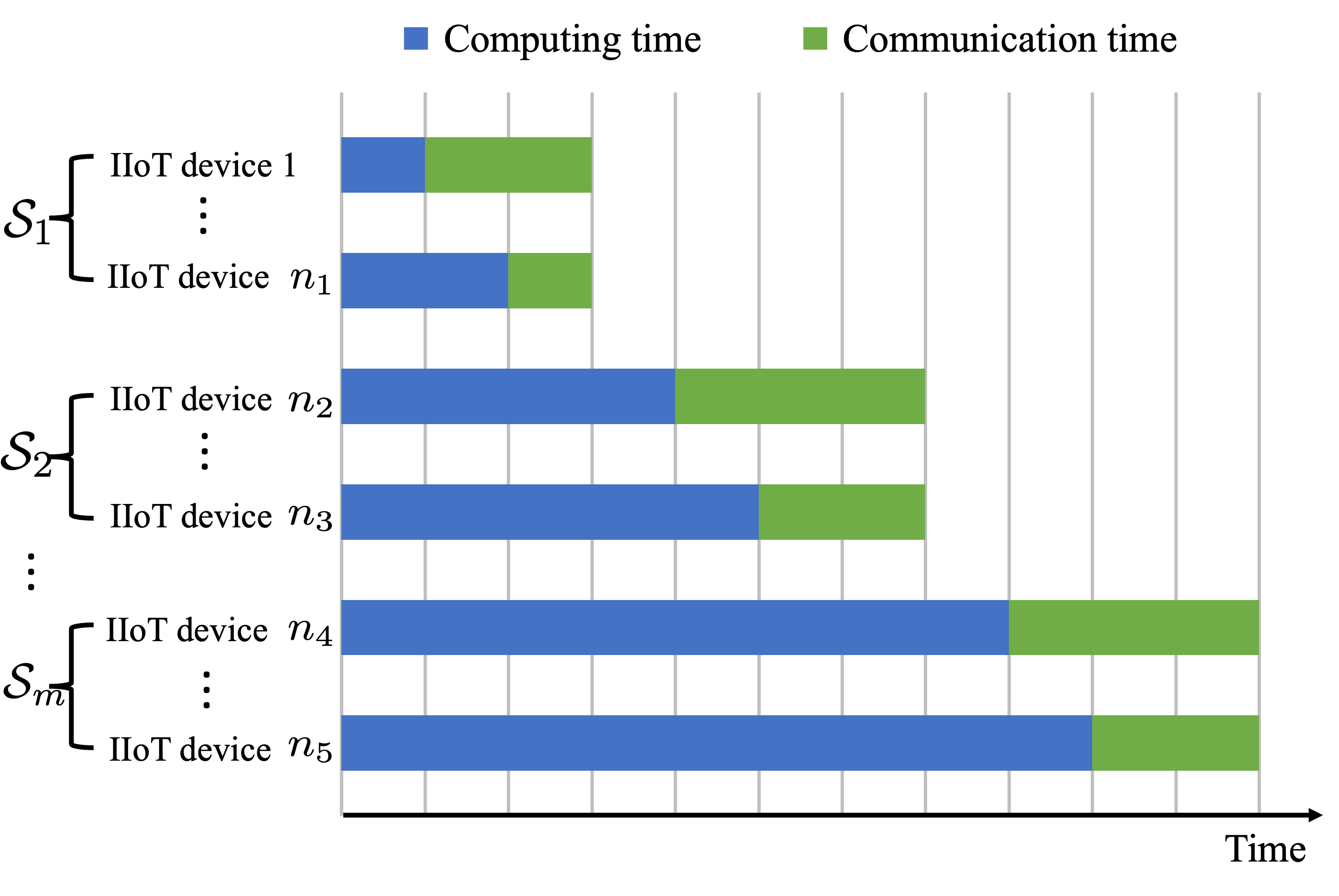}
			\caption{Timeline illustration of DPBP}\label{fig:example_DPBP}
		\end{figure}

		For each subset $\mathcal{S} \in \mathcal{P}$, the intra-subset bandwidth 
		allocation $\overline{\mathbf{b}^k}_{\mathcal{S}}$ is chosen so that all devices 
		in $\mathcal{S}$ complete transmission simultaneously using entire bandwidth, as illustrated in Fig.~\ref{fig:example_DPBP}. This allocation strategy was originally shown to be optimal under a single partition in~\cite{shi2020device}; however, 
		since~\cite{shi2020device} assumes transmission power proportional to allocated 
		bandwidth, we modify the algorithm to accommodate the fixed transmission power 
		model considered here.

		The complexity of determining the bandwidth allocation for each subset scales 
		linearly with $|\mathcal{N}_k| = qN$. The number of subsets that DPBP considers is proportional to $N$, its overall complexity is $\mathcal{O}(N^2)$. As the partitioning and 
		bandwidth allocation are performed at the server, this computational burden 
		remains manageable given the server's processing 
		capabilities~\cite{gong2014base}.
        
	    \section{Experiments}\label{sec:exp}		
		
		\subsection{Setting}
		We use two widely used real-world dataset to evaluate the performance of our proposed bandwidth allocation policy for FL. The first dataset is GC10-Det~\cite{lv2020deep}, which is a collection of 2,300 metallic surface images. Each image show different 10 defects on metallic surfaces such as silk spot, welding line, and other scratches. By training through this images, models can learn to detect various types of defects on metallic surfaces, which is crucial for quality control in manufacturing processes. The second dataset is CIFAR10~\cite{krizhevsky2009learning}, which is a widely used dataset for image classification tasks. It consists of 60,000 color images of size \( 32 \times 32 \) pixels, categorized into 10 classes, including airplanes, cars, birds, and cats. The GC10-Det dataset is used to evaluate the performance of our proposed policy in a real-world industrial application scenario, while the CIFAR-10 dataset serves as a benchmark for general federated learning scenario.

        \subsubsection{GC10-Det dataset}
        We consider $N = 20$ IIoT devices with i.i.d. local data 
        and $K = 300$ communication rounds. In each round, all devices participate. (i.e., $q=1$) The CPU cycle frequency of each device 
        $c_n$ is drawn uniformly at random from $\mathcal{C} = \{1, 5, 10, 20\} \times 10^6$ 
        cycles/s. Each device trains a CNN with eight convolutional layers and two 
        fully-connected layers, with model size $W = 20$~MB. We set $\rho = 10{,}000$ 
        CPU cycles per sample and $E = 5$ local epochs. The total bandwidth is 
        $B = 30$~MHz. The transmit power and noise power spectral density are set $\frac{P}{N_0} = 20$~dB/Mhz. The wireless channels follow Rayleigh fading.

        \subsubsection{CIFAR-10 dataset} We consider $N = 100$ IIoT devices with heterogeneous local 
        data distributed according to a Dirichlet distribution with parameter $0.1$, and $K = 100$ rounds. 
        In each round, $30\%$ of devices are selected to participate. In other words, we set $q=0.3$. Each device 
        trains a CNN with two convolutional layers and three fully-connected layers, 
        with model size $W = 5$~MB. We set $\rho = 1{,}000$, $E = 3$, and 
        $B = 20$~MHz. All other parameters are identical to those in the GC10-Det 
        experiment.

        We compare the proposed policy against three baselines: single partition 
        (SP)~\cite{shi2020device}, which is known to be optimal among all 
        single-partition strategies; channel-aware (CA)~\cite{ren2020scheduling}, 
        which allocates bandwidth such that all devices achieve equal transmission 
        rates; and uniform allocation, which assigns equal bandwidth to every device.

		\subsection{Results}

		\subsubsection{Performance Comparision}

		\begin{figure}[!htb]
			\centering
			\includegraphics[width=\columnwidth]{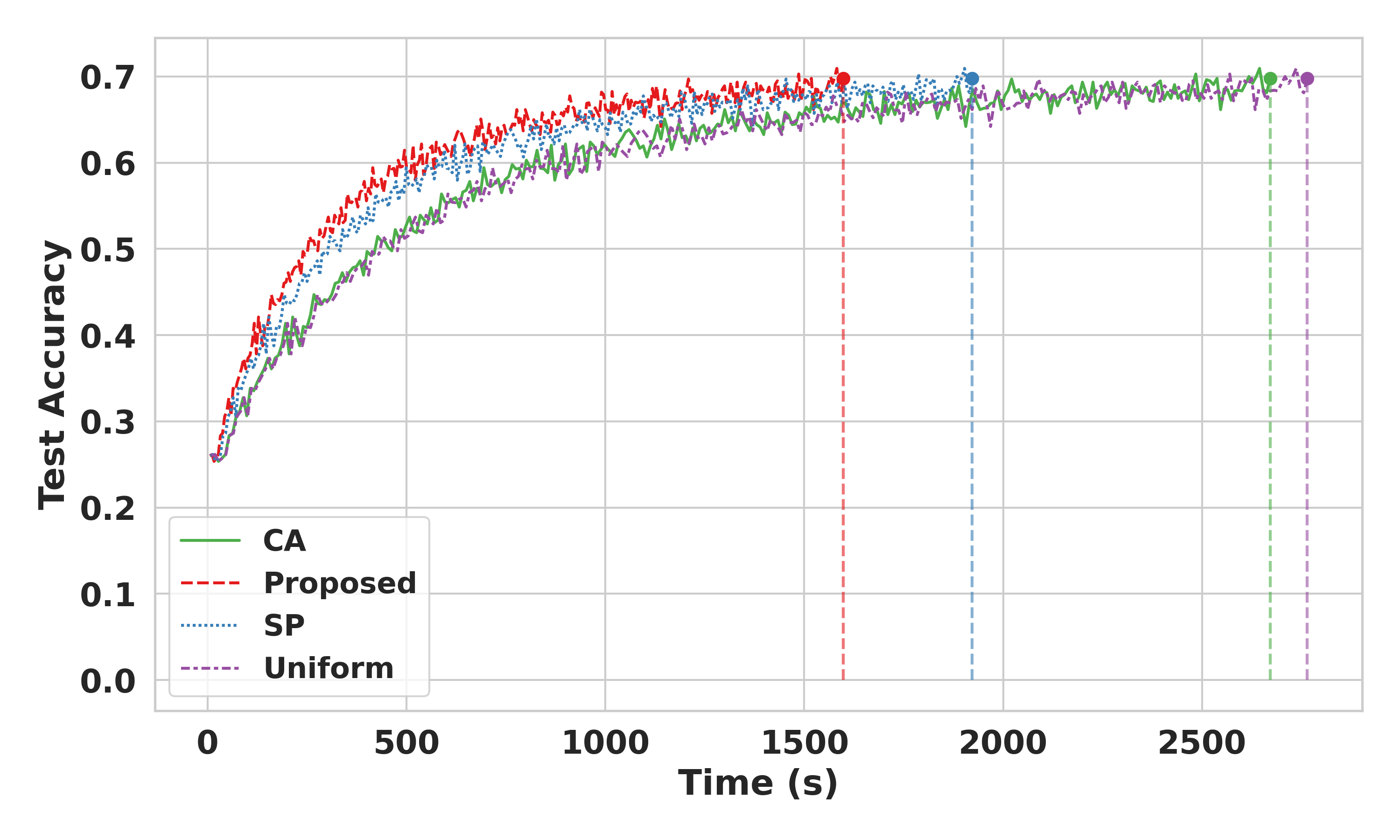}
			\caption{Test accuracy over time for the GC10-Det 
			dataset.}\label{fig:accuracy_vs_time_gc10}
		\end{figure}

		\begin{table}[!htb]
			\centering
			\begin{tabular}{|c|c|c|c|c|}
				\hline
				Method & Proposed & SP & CA & Uniform \\
				\hline
				Training time (s) & 1064.89 & 1281.25 & 1781.31 & 1839.46 \\
				\hline
			\end{tabular}
			\caption{Total training time for the GC10-Det 
			dataset.}\label{tbl:training_time_gc10}
		\end{table}

		\begin{figure}[!htb]
			\centering
			\includegraphics[width=\columnwidth]{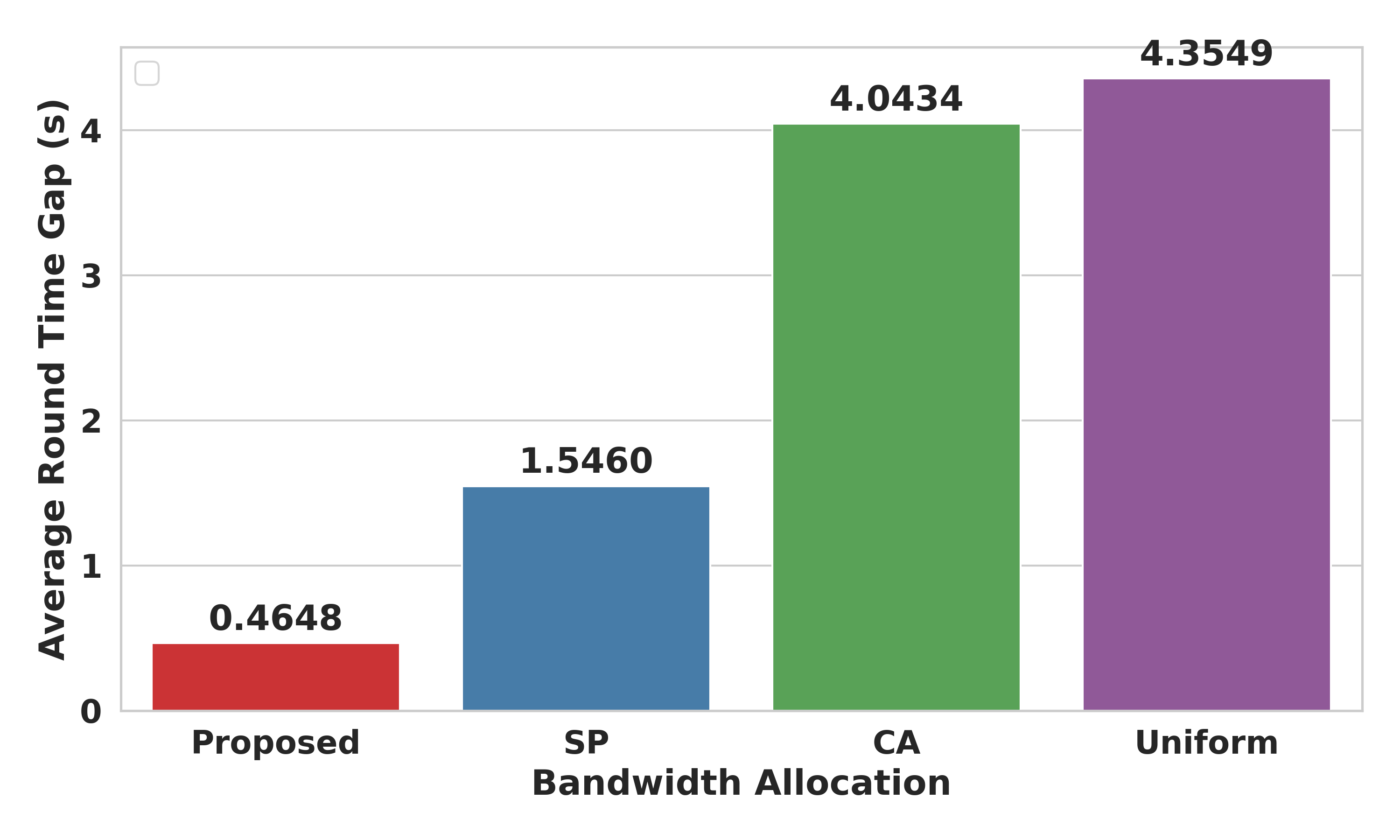}
			\caption{Average round time gap for GC10-Det 
			dataset.}\label{fig:lb_gap_gc10}
		\end{figure}

		\begin{figure}[!htb]
			\centering
			\includegraphics[width=\columnwidth]{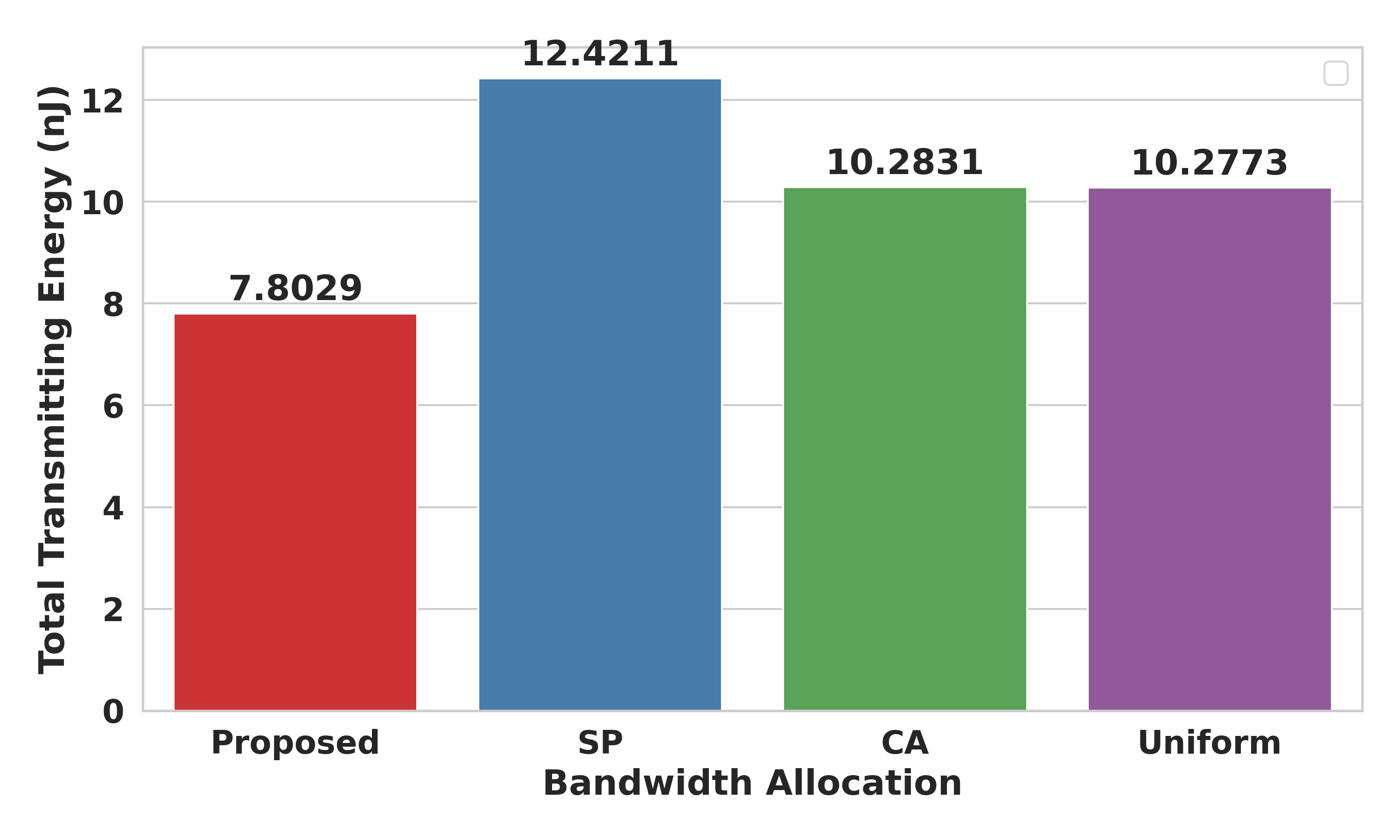}
			\caption{Total uplink energy consumption for GC10-Det 
			dataset.}\label{fig:energy_gc10}
		\end{figure}

		Fig.~\ref{fig:accuracy_vs_time_gc10} and Table~\ref{tbl:training_time_gc10} 
		show the test accuracy and total training time for the GC10-Det dataset. The 
		proposed policy achieves the shortest training time among all methods. 
		Fig.~\ref{fig:lb_gap_gc10} further shows the average round time gap, where 
		the proposed policy most closely approaches the theoretical lower bound. 
		Notably, it yields a substantial reduction over SP, which is itself optimal 
		among single-partition strategies. This demonstrates that device partitioning 
		effectively reduces the communication overhead of FL over IIoT networks.
		
		Fig.~\ref{fig:energy_gc10} reports the total uplink energy consumed by IIoT 
		devices across all rounds. The proposed policy achieves the lowest energy 
		consumption, as allowing devices to transmit with higher bandwidth reduces 
		transmission duration. In contrast, SP exhibits the highest energy consumption 
		despite its second-best round time performance. Under SP, devices that finish 
		computation early must continue transmitting at reduced bandwidth while waiting 
		for slower devices to complete both computation and transmission. This 
		prolonged transmission duration leads to substantially higher energy use.

		\begin{figure}[!htb]
			\centering
			\includegraphics[width=\columnwidth]{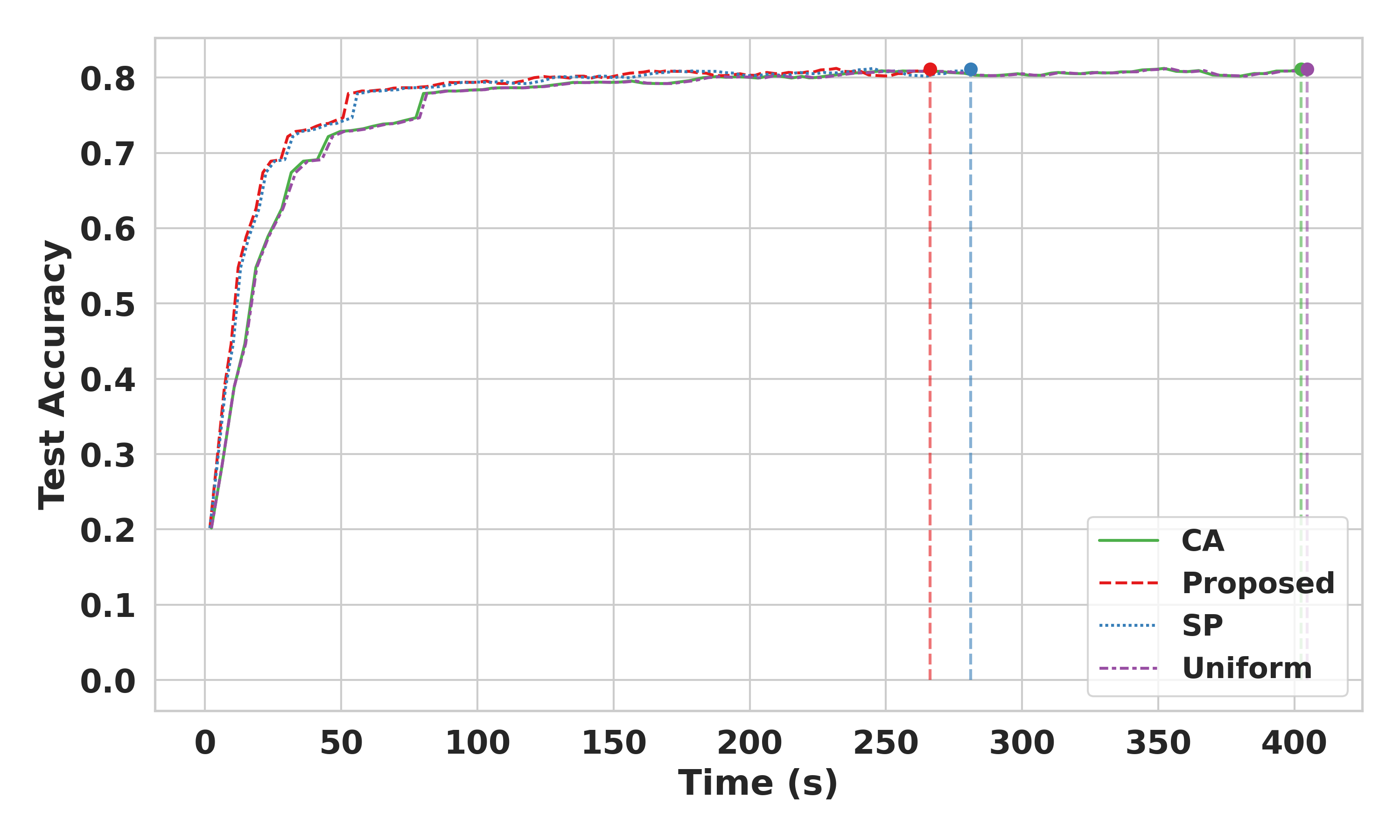}
			\caption{Test accuracy over time for the CIFAR-10 
			dataset.}\label{fig:accuracy_vs_time_cifar10}
		\end{figure}

		\begin{table}[!htb]
			\centering
			\begin{tabular}{|c|c|c|c|c|}
				\hline
				Method & Proposed & SP & CA & Uniform \\
				\hline
				Training time (s) & 133.14 & 140.87 & 201.93 & 203.59 \\
				\hline
			\end{tabular}
			\caption{Total training time for the CIFAR-10 
			dataset.}\label{tbl:training_time_cifar10}
		\end{table}

		\begin{figure}[!htb]
			\centering
			\includegraphics[width=\columnwidth]{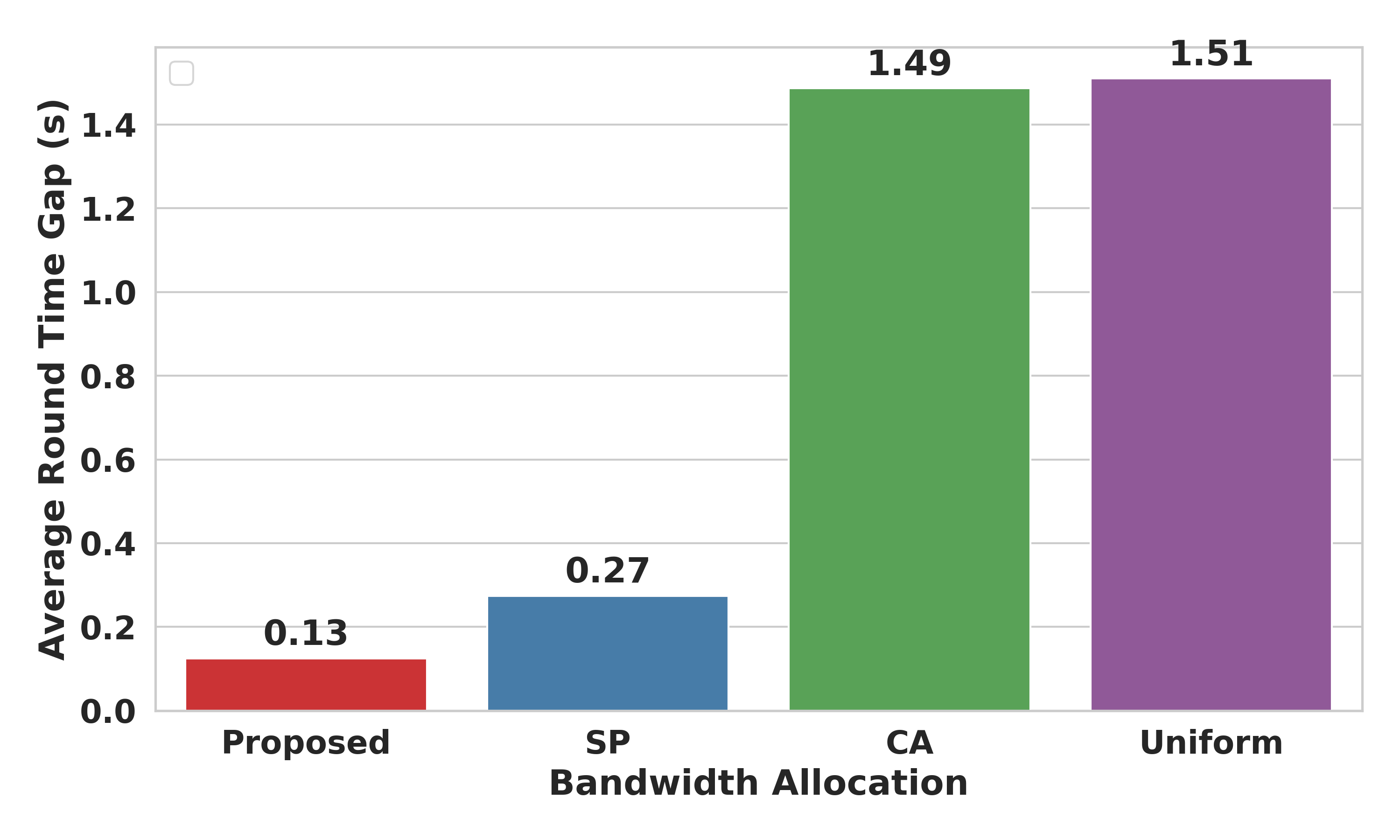}
			\caption{Average round time gap for CIFAR-10 
			dataset.}\label{fig:lb_gap_cifar10}
		\end{figure}

		\begin{figure}[!htb]
			\centering
			\includegraphics[width=\columnwidth]{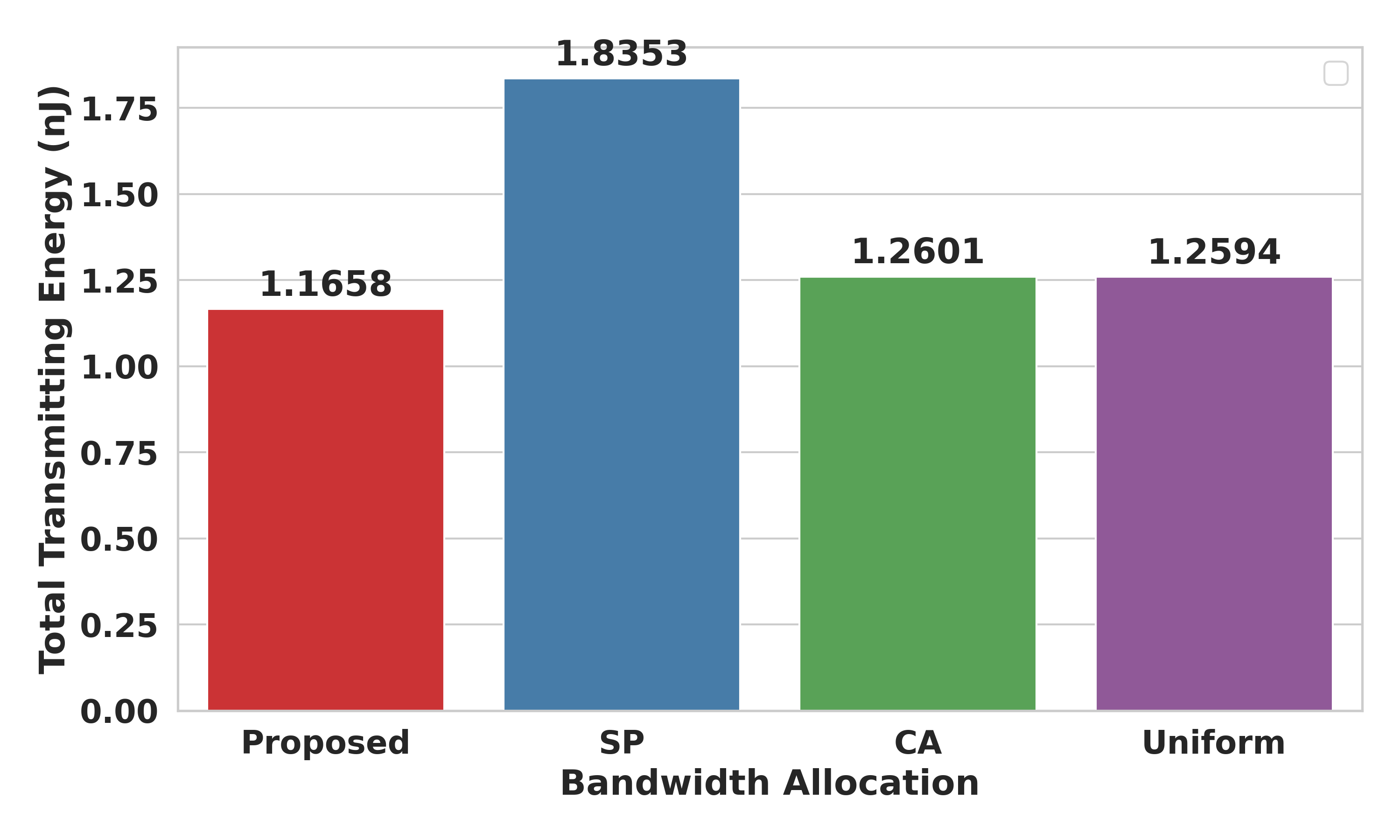}
			\caption{Total uplink energy consumption for CIFAR-10 
			dataset.}\label{fig:energy_cifar10}
		\end{figure}

		Results for CIFAR-10 are shown in Figs.~\ref{fig:accuracy_vs_time_cifar10}, 
		\ref{fig:lb_gap_cifar10}, and Table~\ref{tbl:training_time_cifar10}, 
		exhibiting trends consistent with those observed for GC10-Det. The proposed 
		policy again nearly achieves the lower bound on round time gap. However, the 
		margin over SP is reduced compared to the GC10-Det setting. This is 
		attributable to the smaller model size, which reduces per-round communication 
		time and consequently narrows the absolute deviation of the round time gap 
		from the lower bound across all methods.

		\subsubsection{Impact of Bandwidth and Computation Load}

		\begin{figure}[!htb]
			\centering
			\includegraphics[width=\columnwidth]{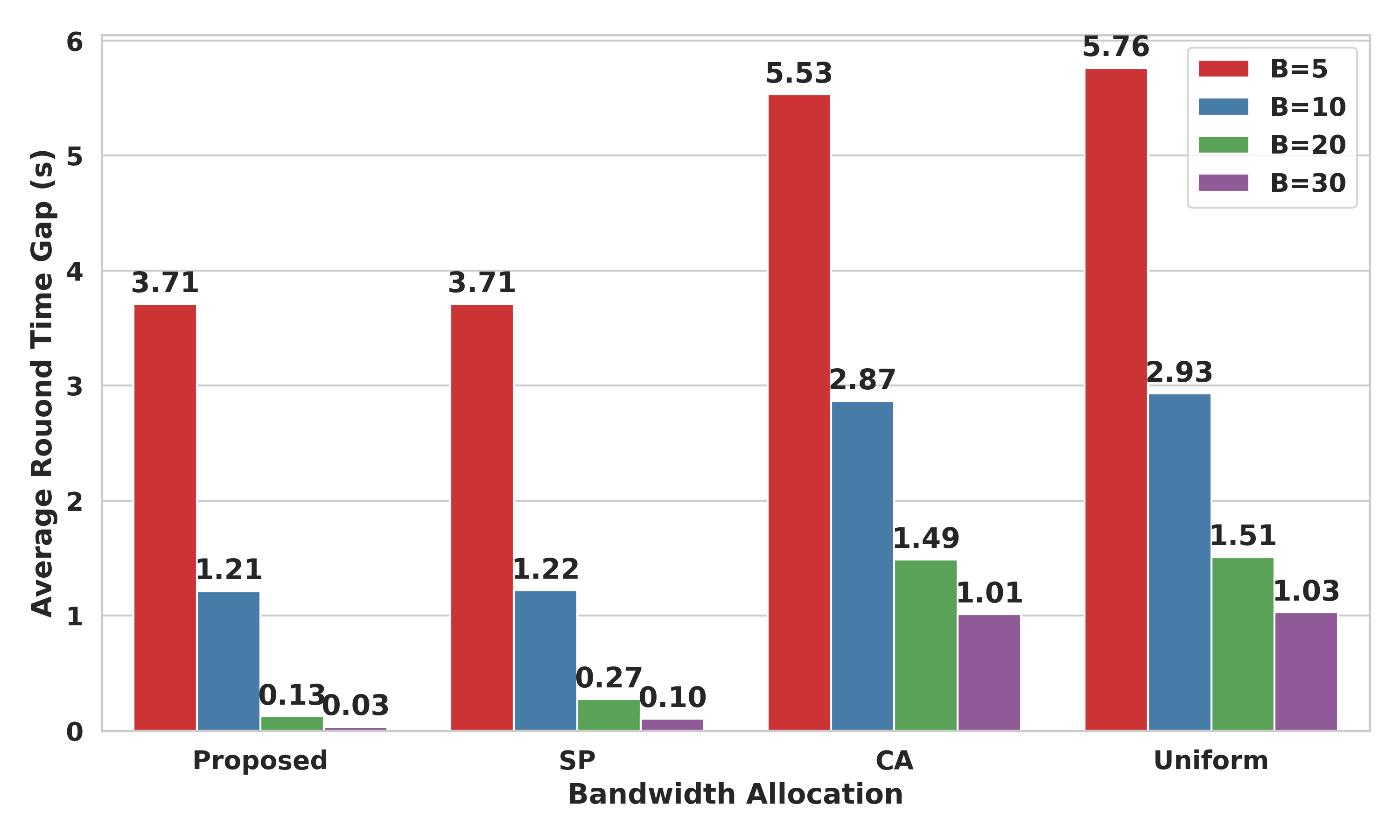}
			\caption{Average round time gap under varying total 
			bandwidth.}\label{fig:bw}
		\end{figure}

		\begin{figure}[!htb]
			\centering
			\includegraphics[width=\columnwidth]{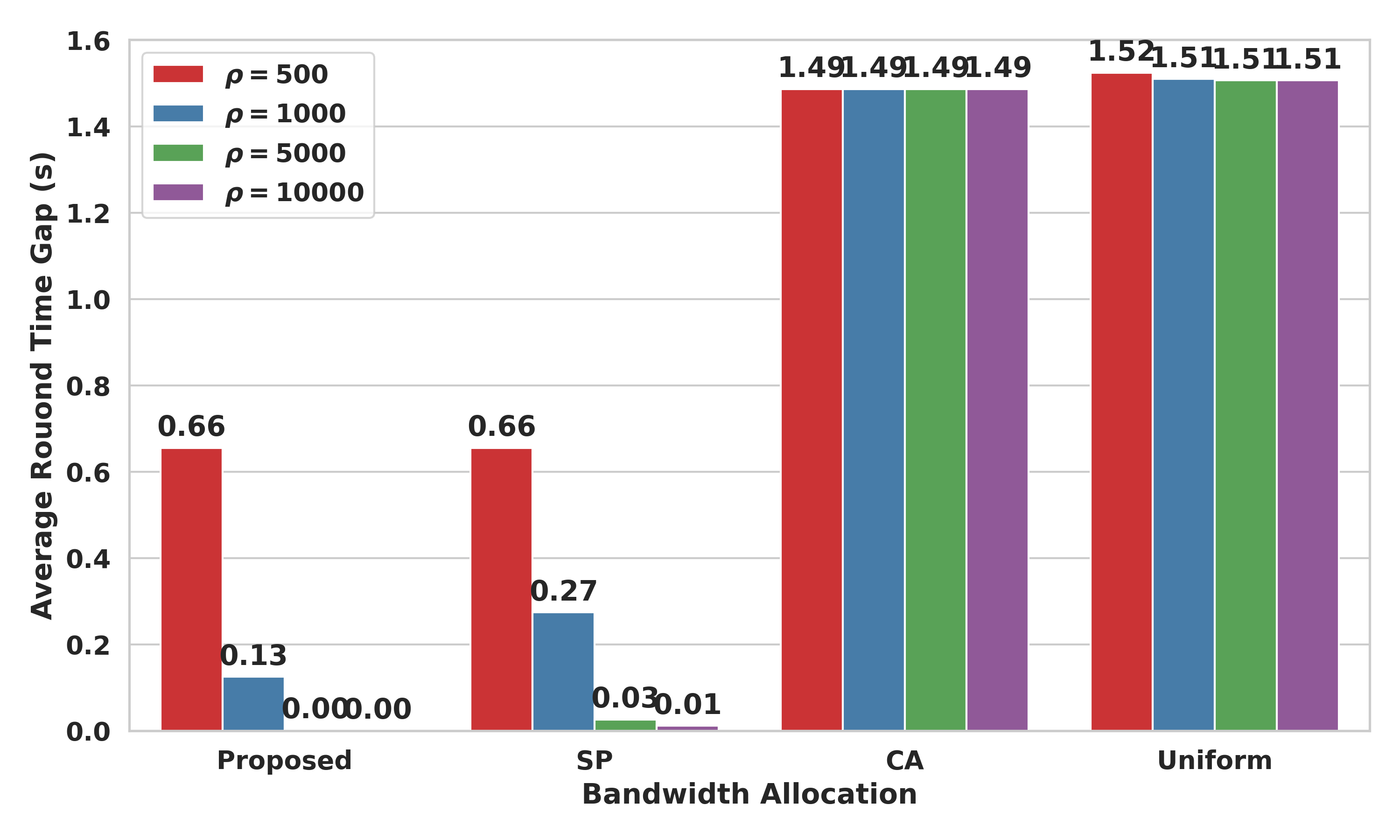}
			\caption{Average round time gap under varying computation 
			load.}\label{fig:compload}
		\end{figure}
		
		Fig.~\ref{fig:bw} shows the average round time gap as a function of total 
		bandwidth $B$, with all other settings identical to the CIFAR-10 experiment. 
		Reducing $B$ increases the round time gap for all policies, as lower bandwidth 
		directly increases per-device transmission time, making it harder to satisfy 
		the partitioning condition~\eqref{ineq:cond_prop2}. This is confirmed at 
		$B = 5$~MHz, where the proposed policy degrades to the performance of SP, 
		indicating that partitioning no longer occurs.

		Conversely, as $B$ increases, the performance gap among policies diminishes. 
		For example, the average round time gaps of the proposed policy and uniform 
		allocation are $0.03$~s and $1.03$~s at $B = 30$~MHz, compared to $1.21$~s 
		and $2.93$~s at $B = 10$~MHz. Under practical noise models, noise power grows 
		proportionally with bandwidth, causing the transmission rate to saturate at 
		large $B$. Consequently, further increasing $B$ yields diminishing reductions 
		in communication time, compressing the differences among policies.

		Fig.~\ref{fig:compload} illustrates the effect of computation load on 
		bandwidth allocation performance. As computation load increases, the round 
		time gaps of the proposed policy and SP decrease, while those of CA and 
		uniform allocation remain largely unchanged. Under heterogeneous computing 
		capabilities, higher computation load amplifies the spread of per-device 
		computation times, which makes the partitioning condition easier to satisfy 
		and allows the proposed policy to approach the lower bound. SP similarly 
		benefits by concentrating bandwidth on the straggler, reducing its 
		transmission time. In contrast, CA and uniform allocation lack mechanisms 
		that adapt to computation heterogeneity, so their performance is insensitive 
		to computation load.

		\begin{figure}[!htb]
			\centering
			\includegraphics[width=1.1\columnwidth]{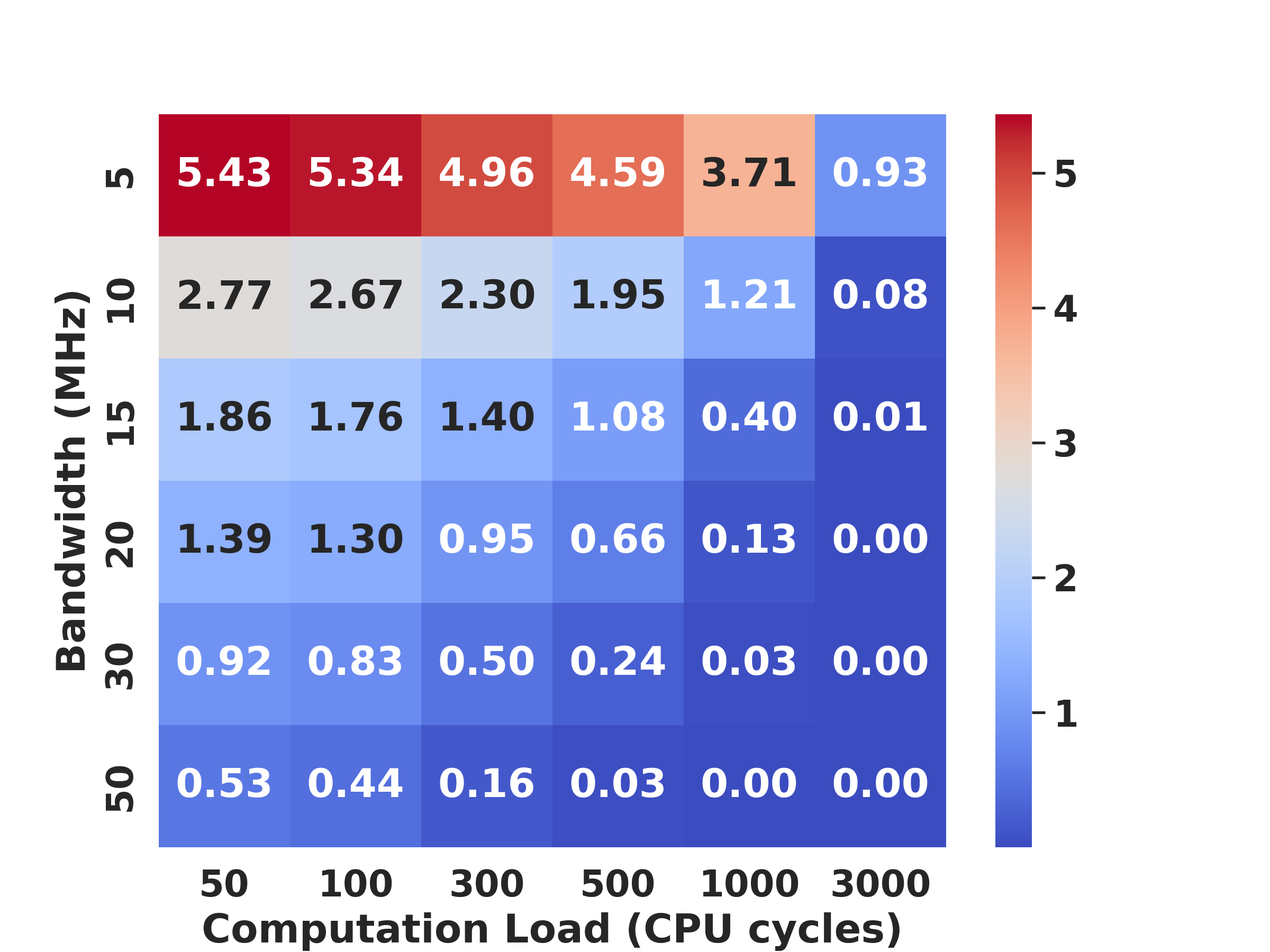}
			\caption{Heatmap of average round time gap across bandwidth and computation 
			load settings.}\label{fig:heat_roundtime}
		\end{figure}

		\begin{figure}[!htb]
			\centering
			\includegraphics[width=1.1\columnwidth]{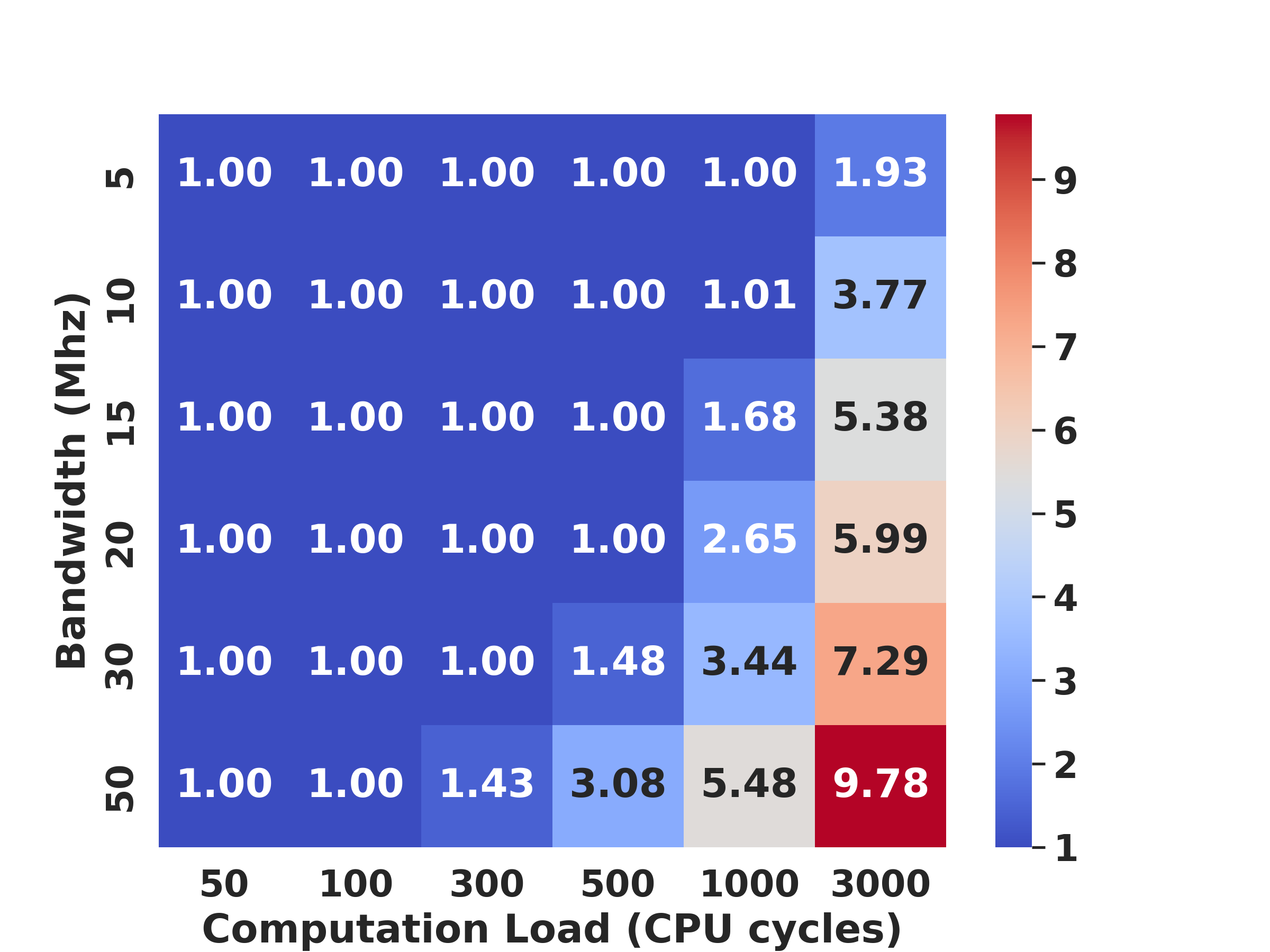}
			\caption{Heatmap of average number of partition groups across bandwidth and 
			computation load settings.}\label{fig:heat_numgroups}
		\end{figure}

		\begin{figure}[!htb]
			\centering
			\includegraphics[width=1.1\columnwidth]{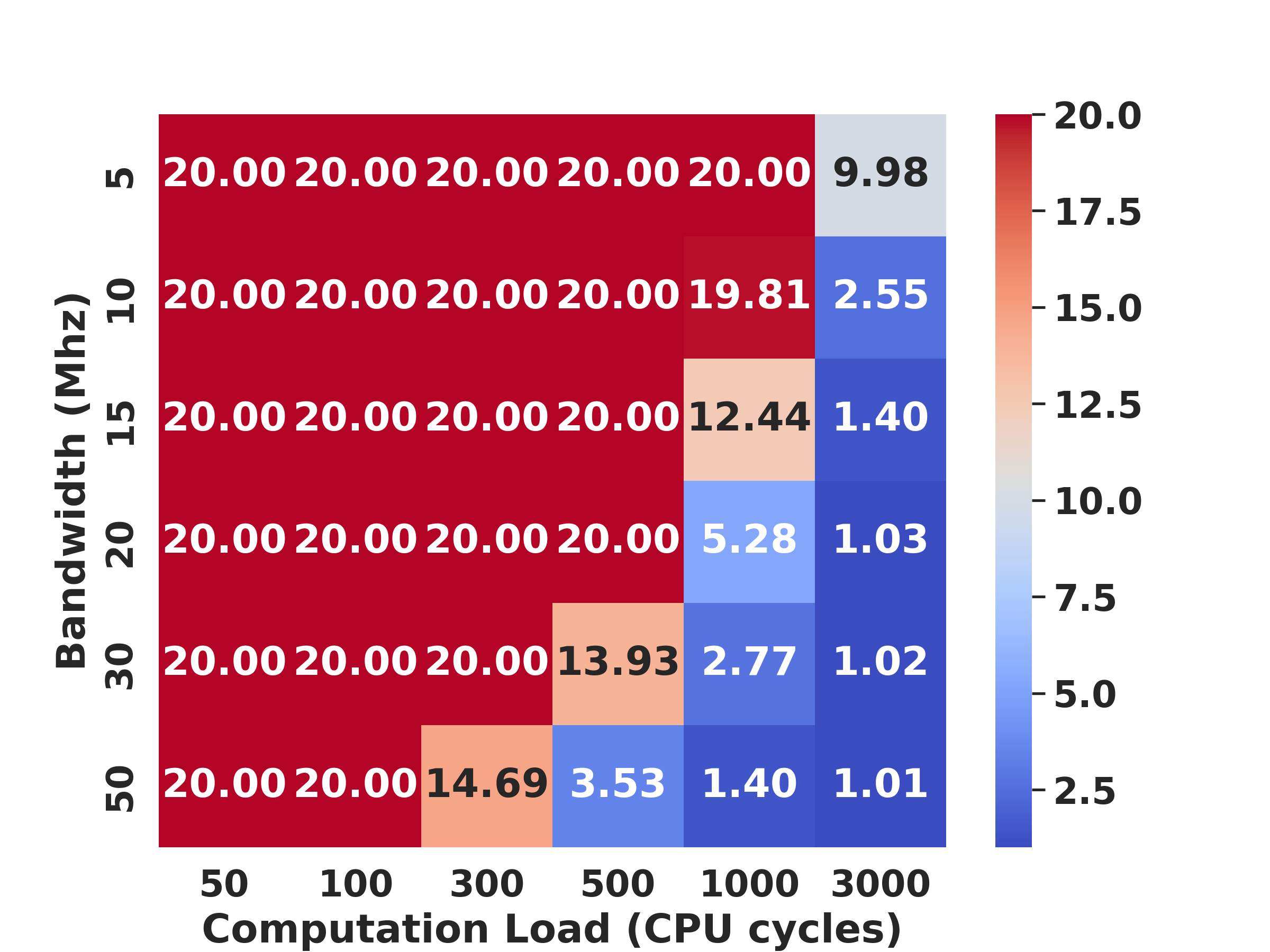}
			\caption{Heatmap of average size of the last group in partition across bandwidth and 
			computation load settings.}\label{fig:heat_lastgroupsize}
		\end{figure}

		Figs.~\ref{fig:heat_roundtime} and~\ref{fig:heat_numgroups} present heatmaps 
		of the average round time gap and average number of partition groups across a 
		range of bandwidth and computation load values. Both increasing $B$ and 
		increasing computation load help the proposed policy approach the lower bound, 
		as larger bandwidth shortens communication time and higher computation load 
		increases the spread of computing times — both of which facilitate partitioning.

		Comparing the two heatmaps, round time gaps within $0.1$~s are achieved only 
		when multiple partitions are formed. However, a large number of groups is not 
		necessary to achieve near-lower-bound performance. For instance, with 
		$\rho = 3{,}000$ fixed, increasing $B$ from $20$ to $50$~MHz raises the 
		average number of groups from $5.99$ to $9.78$, yet the proposed policy 
		already achieves the lower bound at $B = 20$~MHz. This indicates that once 
		the straggler's computation time becomes the bottleneck, the critical factor 
		is ensuring that the straggler receives the full bandwidth $B$ — not the 
		total number of groups. In other words, any policy that grants the final 
		partition exclusive access to the full bandwidth achieves near-lower-bound 
		performance regardless of how many groups precede it. This can be verified in Fig.~\ref{fig:heat_lastgroupsize}, which clearly indicates that the average round time gap is proportional to the size of last group in partition.

		\subsubsection{Impact of Scheduling Algorithm and Number of Devices}

		\begin{figure}[!htb]
			\centering
			\includegraphics[width=\columnwidth]{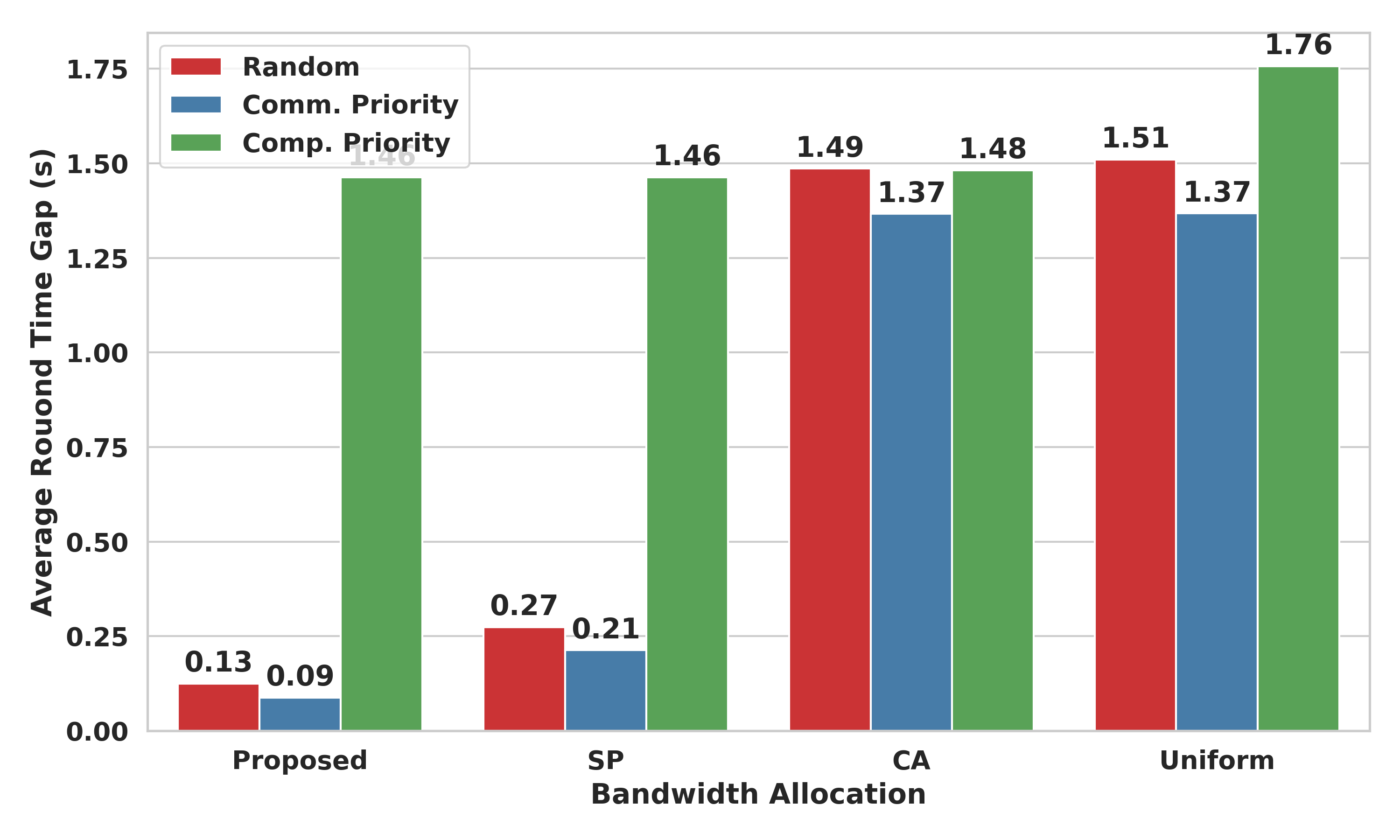}
			\caption{Average round time gap under different scheduling 
			algorithms.}\label{fig:scheduling}
		\end{figure}

		Fig.~\ref{fig:scheduling} compares the performance of bandwidth allocation 
		policies under three scheduling schemes: random, communication-priority, and 
		computation-priority for the same setting of CIFAR10 experiment. Random scheduling selects participating devices with 
		equal probability. Communication-priority scheduling selects devices with the 
		highest channel gains, minimizing communication cost. Computation-priority 
		scheduling selects devices with the shortest local computation times.

		Across all scheduling schemes, the proposed policy achieves the smallest round 
		time gap. However, the advantage over baseline methods narrows under 
		computation-priority scheduling. When devices with similar computation times 
		are selected, the spread of per-device computation times is small, which makes 
		the partitioning condition harder to satisfy and reduces the frequency of 
		partitioning. Additionally, under computation-priority scheduling, CA performs 
		comparably to the proposed and SP policies: since selected devices begin 
		transmission at nearly the same time, ensuring equal transmission rates across 
		devices is sufficient to achieve near-simultaneous completion.

		Among the three scheduling strategies, communication-priority scheduling 
		yields the smallest round time gap for the proposed policy. High channel gains 
		reduce per-device communication time, which allows finer-grained partitioning 
		and reduces the size of the final group. With fewer devices competing for 
		bandwidth in the last partition, each device receives a larger share of $B$, 
		further reducing the straggler's transmission time.

		\begin{figure}[!htb]
			\centering
			\includegraphics[width=\columnwidth]{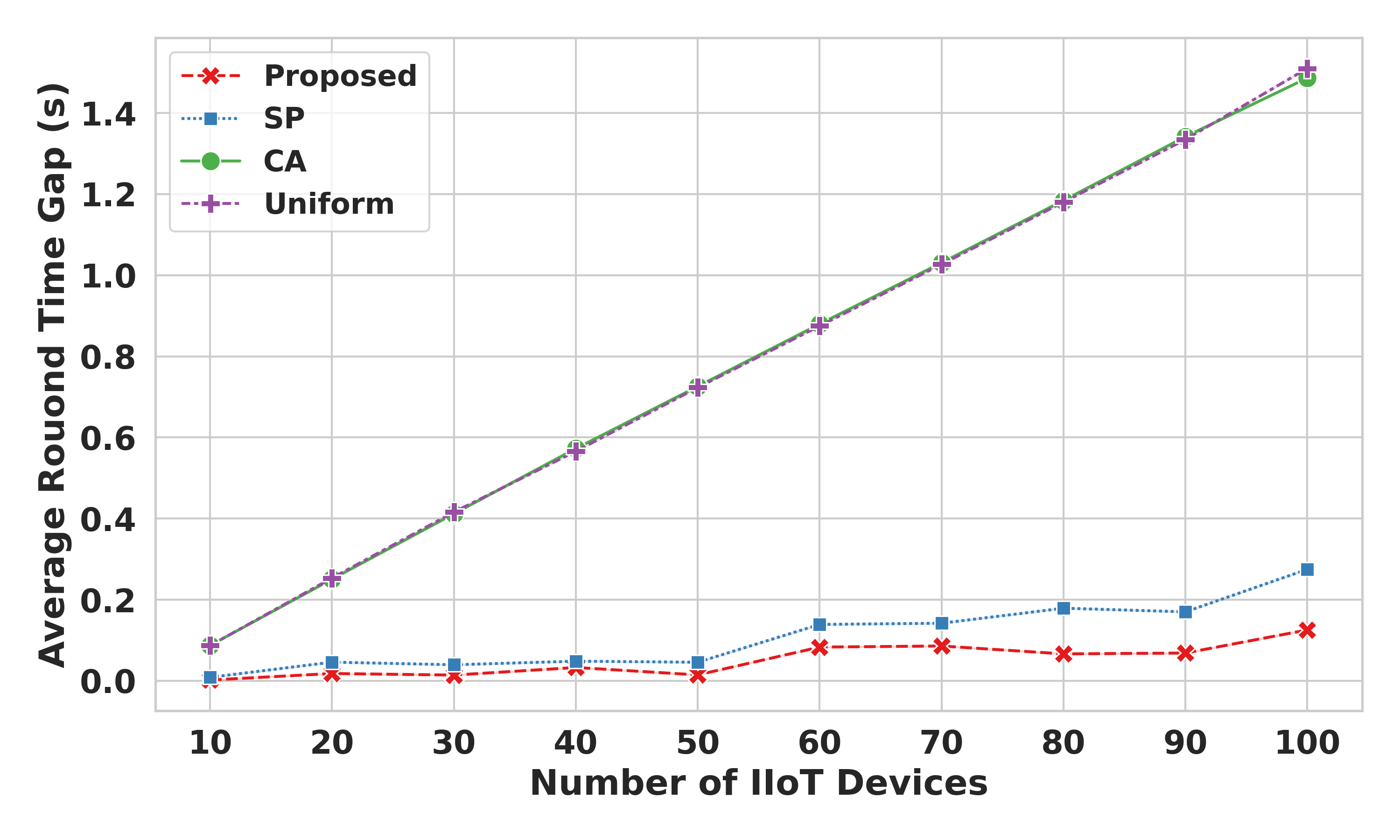}
			\caption{Average round time gap for different number of IIoT devices}\label{fig:num_clients}
		\end{figure}

		Fig.~\ref{fig:num_clients} shows the average round time gap as a function of 
		the number of IIoT devices $N$. As $N$ increases, the performance gap between 
		the proposed policy and the baselines widens. In particular, the round time 
		gaps of CA and uniform allocation grow approximately linearly with $N$, whereas 
		those of the proposed policy and SP increase at a significantly slower rate.

		This divergence is driven by the fact that, given a fixed total bandwidth, the bandwidth available per device diminishes as the network scales. Since the communication round time is dictated by the straggler, the performance of CA and Uniform—which fail to adaptively assign bandwidth to the straggler—degrades as the per-device allocation shrinks. Conversely, our proposed policy and SP dynamically determine bandwidth based on the computation times of the devices, resulting in a sublinear growth of the average round time gap.

		The performance margin between the proposed policy and SP also widens at large 
		$N$. While SP adaptively allocates bandwidth, it is constrained to assign 
		non-zero bandwidth to every selected device simultaneously. At large $N$, this 
		constraint forces the straggler to share bandwidth with many other devices, 
		causing its communication time to notably exceed the theoretical lower bound. 
		The proposed policy alleviates this by grouping devices into sequential 
		subsets, thereby granting the final subset, which contains the straggler exclusive access to the full bandwidth $B$, regardless of $N$.

		Interestingly, the average round time gap of our proposed policy does not follow a strictly monotonic trend. For instance, the gap at $N=40$ is larger than at $N=50$, with similar behavior observed between $N=70$ and $N=80$. This occurs because our policy achieves a smaller round time gap when the size of the final group in the partition is minimized. Since the final group size is not a monotone function of $N$, adding a device 
		with a sufficiently long computation time can cause it to be assigned to its 
		own group, effectively reducing the final group size and thereby decreasing 
		the round time gap.

		\section{Conclusion}\label{sec:conclusion}
    	In this paper, we investigated bandwidth allocation for FL over IIoT networks 
		and proposed DBBP, a bandwidth allocation policy. In each 
		communication round, DPBP partitions the selected IIoT devices into multiple 
		ordered subsets and grants each subset exclusive access to the full bandwidth 
		sequentially. We formally proved that this partitioning approach achieves a 
		lower round time gap than the optimal single-partition strategy, irrespective 
		of the underlying scheduling algorithm. Extensive experiments on the GC10-Det 
		and CIFAR-10 datasets confirmed that DPBP consistently reduces total training 
		time and uplink energy consumption compared to existing bandwidth allocation 
		methods, validating the practical benefit of device partitioning in FL systems 
		deployed over IIoT networks.

		Several directions remain open for future work. First, as energy efficiency is 
		a critical constraint in IIoT deployments, extending DPBP to incorporate 
		explicit energy consumption objectives is a natural next step. Second, joint optimization of device scheduling and bandwidth partitioning presents a promising avenue for further reducing learning time, as the current work treats scheduling as an independent component. Scheduling algorithm to improve the performance of DBBP further can be optimized. Finally, while DPBP is designed for IIoT settings, its applicability 
		extends to any domain where edge devices compete for limited bandwidth 
		resources, including vehicular networks, mobile federated learning, and 
		smart manufacturing.
				
	\bibliographystyle{IEEEtran}
	\bibliography{references}

	\end{document}